\documentclass{article}

\PassOptionsToPackage{numbers,sort&compress}{natbib}
\usepackage[preprint]{neurips_2026}

\usepackage[utf8]{inputenc} % allow utf-8 input
\usepackage[T1]{fontenc}    % use 8-bit T1 fonts
\usepackage[hidelinks]{hyperref}

\usepackage{url}            % simple URL typesetting
\usepackage{booktabs}       % professional-quality tables
\usepackage{amsfonts}       % blackboard math symbols
\usepackage{nicefrac}       % compact symbols for 1/2, etc.
\usepackage{microtype}      % microtypography
\usepackage{xcolor}         % colors
\usepackage{booktabs}   % for \toprule, \midrule, \bottomrule
\usepackage{multirow}   % for \multirow
\usepackage{makecell}   % for \makecell
\usepackage{amsmath,amssymb,amsthm,graphicx}
\usepackage[ruled,vlined,linesnumbered]{algorithm2e}
\usepackage{algpseudocode}   % 或 \usepackage{algorithmic}

\usepackage{wrapfig} % for motivation png

\usepackage{xcolor}
\usepackage[table]{xcolor}

\definecolor{rowblue}{RGB}{230,242,255}

\title{CoDMD: Copula-aware Distribution Matching Distillation for Fast Video Generation}

\author{
\textbf{Wenhu Zhang}$^{1}$ \qquad
\textbf{Kun Cheng}$^{2}$\thanks{Project leader,\textsuperscript{\dag}Corresponding author} \qquad
\textbf{Changyuan Wang}$^{3}$ \qquad
\textbf{Shiyao Li}$^{5}$ \qquad
\textbf{Yuechen Zhang}$^{4}$ \\
\textbf{Wenbo Li}$^{4}$ \qquad
\textbf{Jiajun Zha}$^{1}$ \qquad
\textbf{Jingyi Zhang}$^{2}$ \qquad
\textbf{Kang Zhao}$^{2}$ \qquad
\textbf{Jiaya Jia}$^{1}$\textsuperscript{\dag} \\
$^{1}$HKUST \qquad
$^{2}$Wan Team, Alibaba Group\qquad
$^{3}$THU \qquad
$^{4}$CUHK \qquad
$^{5}$ XDU \\
Project page: \url{https://andrew-zhang98.github.io/CoDMD_page}
}

\begin{document}

% \definecolor{ourcolor}{RGB}{200,30,30}                 % red highlight for our additions

% \definecolor{ourcolor}{RGB}{255,170,102}
\definecolor{ourcolor}{RGB}{235,140,70}

\definecolor{ourhighlight}{RGB}{255,237,215}
\newcommand{\ourc}[1]{\textcolor{ourcolor}{#1}}
\newcommand{\methodname}{CoDMD\xspace}                 % short name (used everywhere)
\newcommand{\methodfull}{Copula-aware DMD\xspace}      % full name (first occurrence)

\definecolor{myred}{RGB}{200,30,30}      
\definecolor{myblue}{RGB}{0,70,160}
\definecolor{mygreen}{RGB}{0,120,0}
\definecolor{rowblue}{RGB}{230,242,255}

\newcommand{\red}[1]{\textbf{\textcolor{myred}{#1}}}
\newcommand{\blue}[1]{\textbf{\textcolor{myblue}{#1}}}
\newcommand{\green}[1]{\textbf{\textcolor{mygreen}{#1}}}

\maketitle

% \vspace{-1cm}
% \begin{center}
% \textbf{Project page:} \url{https://andrew-zhang98.github.io/CoDMD_page}
% \end{center}

\begin{abstract}
Few-step distillation for video diffusion models has attracted significant attention, driven by the urgent demand for efficient deployment in real-world scenarios. 
However, Distribution Matching Distillation (DMD), a leading paradigm, tends to degrade under limited NFE budgets, manifesting in video generation as layout instability, oversaturation, and broken motion dynamics.
We trace this failure to a structural limitation: standard DMD is an intra-sample distribution-matching objective with coordinate-wise gradients, and thus imposes no explicit constraint on the relational geometry across batch elements or temporal frames, leaving the underlying copula largely unregulated.
Combined with the mode-seeking tendency of its reverse-KL objective, this absence of relational guidance makes DMD prone to collapsing into local optima in the few-step regime.
Motivated by this insight, we propose Copula-aware DMD (CoDMD), a lightweight relational regularizer that reuses score estimates already produced by the frozen teacher and the online fake model to construct pairwise relation matrices across samples and frames.
These are matched through a supplementary distributional objective that requires no additional networks, datasets, or sampling trajectories. 
On the Wan-2.1-T2V model series at 1.3B \& 14B scales, CoDMD distills 50-step teachers into 4-step students, achieving an approximate 25$\times$ speed-up while attaining VBench scores of 84.46 \& 84.87, outperforming prior trajectory-based (rCM 82.81 \& 84.05) and distribution-based (DMD 83.38 \& 83.81) methods. 
\end{abstract}

\vspace{-0.5cm}
\begin{figure*}[h!]
  \centering
    \includegraphics[width=\linewidth]{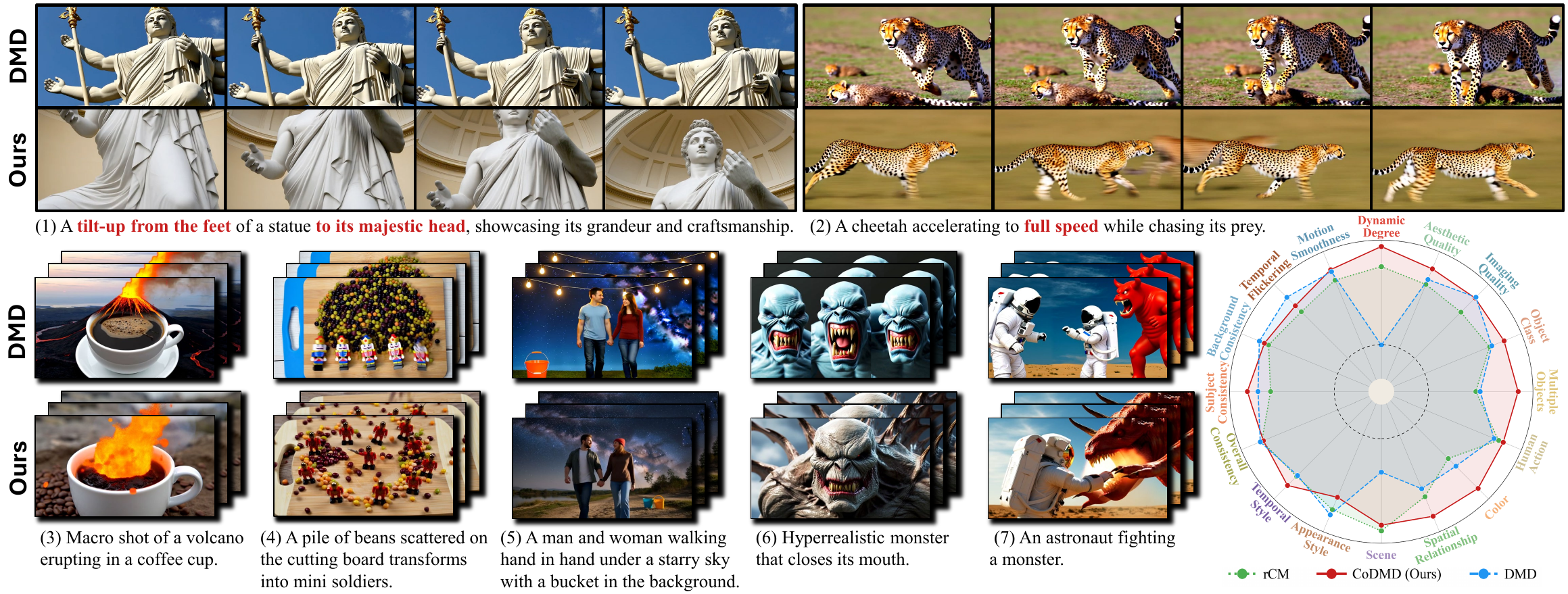}
    \vspace{-0.7cm}
  \caption{Sampled 4-step videos. DMD suffers from (1)\emph{failed camera motion}; (2)\emph{super slow action}; (3,4)\emph{broken layouts}; (5,6,7)\emph{oversaturation} and (4,7)\emph{instruction misalignment}. On the VBench radar, \red{\methodname{} (84.46)} outperforms previous methods, i.e., \blue{DMD (83.38)} and \green{rCM (82.81)}, on most of the dimensions, with the largest gains on \emph{dynamic degree}, \emph{color} and \emph{multiple objects}. Best zoomed in.}
  \label{fig:teaser}
\end{figure*}

\label{sec:intro}

\section{Introduction}
\label{sec:intro}

%-----------------------------------------------------------------------
% Paragraph 1: task + acceleration motivation
%-----------------------------------------------------------------------
Diffusion-based generative models have recently emerged as a leading paradigm for both image and video synthesis~\cite{ho2020denoising,song2020score,rombach2022high,ho2022imagen,karras2022elucidating,blattmann2023stable,wan2025wan}, driven by their favorable scaling behavior and superior perceptual fidelity. However, their practical deployment is hampered by increasing spatial resolutions, longer clip durations, and the iterative denoising process, which typically requires 50–100 forward passes per sample. To bridge this gap, a growing body of work pursues few-step distillation, which compresses the number of function evaluations (NFE) required at inference time and aims to deliver an efficient yet high-fidelity media generator.

%-----------------------------------------------------------------------
% Paragraph 2: DMD paradigm + existing composite remedies (image-only)
%-----------------------------------------------------------------------
Among existing few-step distillation paradigms, Distribution Matching Distillation (DMD)~\cite{yin2024one} aligns the student-induced distribution with that of the teacher through a distributional objective, rather than enforcing strict timestep-by-timestep trajectory consistency. When the number of inference steps is aggressively reduced (e.g., 4 steps or fewer), however, DMD exhibits a pronounced tendency toward mode collapse, manifesting in the image domain as oversaturation and reduced sample diversity.
Several recent efforts augment DMD with auxiliary training paradigms to recover the lost quality. DMD2~\cite{yin2024improved} introduces a GAN loss to sharpen details. DMDR~\cite{jiang2025distribution} combines DMD with reinforcement learning to improve overall fidelity. rCM~\cite{zheng2025large} jointly trains DMD with consistency models to mitigate mode collapse. DP-DMD~\cite{wu2026diversity} introduces a two-stage pipeline with v-prediction to pursue diversity. While these approaches partially relieve the collapse symptoms, they have primarily been designed for the general or image-domain generation. The substantially more challenging regime of few-step video distillation, in contrast, remains relatively underexplored from a DMD perspective.

%-----------------------------------------------------------------------
% Paragraph 3: video failures reveal two intertwined problems
%-----------------------------------------------------------------------
Applying DMD to video generation amplifies image-domain defects into more conspicuous failures. As shown in Fig.~\ref{fig:teaser}, DMD often exhibits (i) compromised motion fidelity (failed camera motion, overly slow actions), (ii) structural inconsistencies (broken layouts, oversaturation), and (iii) semantic misalignment with the input prompt under the 4-step budget.

%-----------------------------------------------------------------------
% Paragraph 4: Sklar-based diagnosis of the two failure sources
%-----------------------------------------------------------------------
To articulate this deficiency precisely, we invoke Sklar's theorem~\cite{sklar1959fonctions,nelsen2006introduction}, which decomposes any joint distribution into its marginal distributions and a \textbf{copula} that fully encodes their dependence structure. From this perspective, high-quality video generation requires not only realistic local appearance (the marginals) but also faithful dependence among frames, objects, layouts, and motions (the copula). Standard DMD provides distributional supervision at the level of individual generated samples, yet it imposes no explicit constraint on the pairwise relational geometry (e.g., distances or angles) across frames or across samples. Consequently, two student distributions can appear nearly indistinguishable under marginal or pointwise sample-level criteria while exhibiting substantially different temporal, spatial, or semantic couplings.

This under-constrained relational degree of freedom is particularly detrimental for video. Motion strength, camera trajectory, object interaction, and layout persistence are all expressed through copulas rather than isolated per-frame statistics. Weakly regulated copulas therefore manifest as unstable dynamics, broken layouts, over-smoothed or weakened motion, and inconsistent semantic alignment. This issue is further exacerbated by the reverse-KL objective underlying DMD: reverse-KL aggressively penalizes the student for placing probability mass in regions weakly supported by the teacher, but does not symmetrically penalize missed teacher-supported modes that the student fails to cover~\cite{wu2026diversity,lu2025adversarial,xu2025one}. Under stringent NFE budgets, where the student's expressive capacity is already constrained, this asymmetry biases optimization toward overly concentrated local optima that retain only dominant patterns while discarding weaker yet semantically important variations. Together, the lack of explicit copula constraints and the mode-seeking~\cite{bishop2006pattern} tendency of reverse-KL make standard DMD particularly brittle in the few-step video regime.

%-----------------------------------------------------------------------
% Paragraph 5: CoDMD --- lightweight relational augmentation
%-----------------------------------------------------------------------

Guided by this diagnosis, we propose \methodfull{} (\methodname), a lightweight copula-aware regularizer that augments DMD without introducing any auxiliary networks, datasets, or sampling trajectories. Our key observation is that, within every DMD iteration, the score functions of both the frozen teacher and the online fake model are already evaluated for the standard distribution-matching loss. The relational structure of the teacher distribution can therefore be extracted essentially for free. Concretely, \methodname{} reuses these score estimates to construct pairwise relation matrices at two complementary granularities: across batch elements and across temporal frames within each video. A supplementary distributional objective then aligns the student's relation matrices with the teacher's relational geometry, explicitly regularizing the structure that standard DMD leaves under-constrained. By encouraging the student to preserve the teacher's relational variability across both batch and frame, this objective discourages degenerate concentration on a narrow subset of outputs, thereby empirically counteracting the mode-seeking bias of reverse-KL.

%-----------------------------------------------------------------------
% Paragraph 6: experiments + contributions
%-----------------------------------------------------------------------
We evaluate \methodname{} on Wan-2.1-T2V diffusion model series at both 1.3B \& 14B scales. \methodname{} distills 50-step teachers into 4-step students while maintaining strong visual quality, achieving an approximate 25$\times$ lossless acceleration. On VBench, it attains total scores of 84.46 \& 84.87, outperforming trajectory-based methods such as rCM (82.81 \& 84.05) and distribution-based algorithms such as DMD (83.38 \& 83.81), with notable gains in dynamic degree (+15), color (+2.8), and multiple objects (+3.7). Our contributions are as follows:

\begin{itemize}
    \item We provide a copula perspective on why few-step DMD is fragile in video generation: intra-sample coordinate-wise supervision does not explicitly preserve the relational structure required by motion, layout, and semantics, while the reverse-KL objective further encourages overly concentrated solutions under limited NFE budgets.

    \item We propose \methodfull{} (\methodname), a lightweight copula-aware regularizer for DMD that matches teacher-student pairwise relations across batch elements and temporal frames using score estimates already computed in standard training, requiring no extra networks, data, or sampling trajectories.

    \item Across two scales of the Wan-2.1-T2V model series, \methodname{} consistently improves few-step video distillation, achieving $\sim$25$\times$ speed-up while outperforming prior trajectory-based and distribution-based methods on the VBench total score and across most dimensions.

\end{itemize}

\section{Related Work}
\label{sec:related}

\paragraph{Video diffusion and few-step acceleration.}
Modern video generators rest on diffusion and flow-matching formulations~\cite{ho2020denoising,song2020score,lipman2023flow,liu2023flow} realized by spatio-temporal transformer backbones~\cite{blattmann2023stable,wan2025wan,kong2024hunyuanvideo,yang2024cogvideox,seedance2026seedance}. They reach strong fidelity, but require tens to hundreds of denoising steps per clip, which is prohibitive for interactive deployment. Training-free samplers such as DDIM~\cite{song2020denoising} and DPM-Solver~\cite{lu2022dpm,zhang2022fast} shorten the trajectory but still need many steps to remain visually satisfactory. Training-based distillation falls into two families: trajectory-matching methods that align the student along the sampling path~\cite{salimans2022progressive,liu2023instaflow,esser2024scaling,song2023consistency,song2023improved,lu2024simplifying}, and distribution-matching methods that align student and teacher at the data-distribution level. Our work builds on the latter.

\paragraph{Distribution Matching Distillation and its remedies.}
DMD~\cite{yin2024one,yin2024improved} updates the student via a score-as-target surrogate whose gradient is given by the score difference between a frozen teacher and a learnable fake model~\cite{wang2023prolificdreamer}. DMD preserves fine-grained details well in single-image distillation, but its naive few-step use often suffers from oversaturation and mode collapse~\cite{lu2024simplifying,yin2024one,yin2024improved}. These issues become more severe in video generation, where motion weakens, layouts become unstable, and sampled clips grow mutually similar. Recent remedies improve DMD by combining it with additional paradigms, such as distribution-shaping rewards~\cite{jiang2025distribution}, consistency training~\cite{zheng2025large,song2023consistency}, and two-stage pipelines~\cite{wu2026diversity}. While effective, these approaches add extra models, trajectories, or training stages on top of DMD, leaving the underlying coordinate-wise score objective largely unchanged. In contrast, \methodname{} strengthens the DMD objective itself rather than wrapping it with auxiliary machinery.

A parallel line of work extends DMD-style training to autoregressive long-video generation~\cite{yin2025slow,huang2025self,yang2025longlive}. These methods are largely orthogonal to our few-step setting, but their emphasis on temporal error accumulation further highlights the importance of improving the underlying DMD objective, since weak supervision can compound over long horizons.

\paragraph{Relational supervision under computation budgets.}
Relational supervision has long been recognized as a powerful tool for knowledge transfer especially under strict computation budgets, as it captures the structural geometry, e.g., pairwise distances, angles, and affinity matrices between samples rather than just individual predictions \cite{tung2019similarity,peng2019correlation}. Relational Knowledge Distillation~\cite{park2019relational} matches pairwise distances and triplet angles between teacher and student features. Relation Networks~\cite{sung2018learning} and similarity-preserving distillation~\cite{tung2019similarity} align sample affinity matrices. Contrastive representation distillation~\cite{tian2019contrastive} transfers structure through mutual-information bounds.
These foundational works motivate our use of similarity matrices as effective carriers of relational information. 
While, under the NFE-constrained setting in diffusion, the conventional single-teacher paradigm with feature-space matching becomes inadequate. Generative score distillation is intrinsically a triplet-network problem, driven by the dynamic residual gap between a frozen teacher and a fake distribution. Furthermore, to seamlessly integrate with DMD, the supervision must be adapted from the conventional feature space to the score distribution space. \methodname{} addresses this by lifting the teacher--fake residual from point-wise coordinates to pairwise copula-aware relations. To the best of our knowledge, we are the first to introduce relational preserve insight into the score distillation paradigms.

\begin{figure*}[t]
  \centering
    \includegraphics[width=\linewidth]{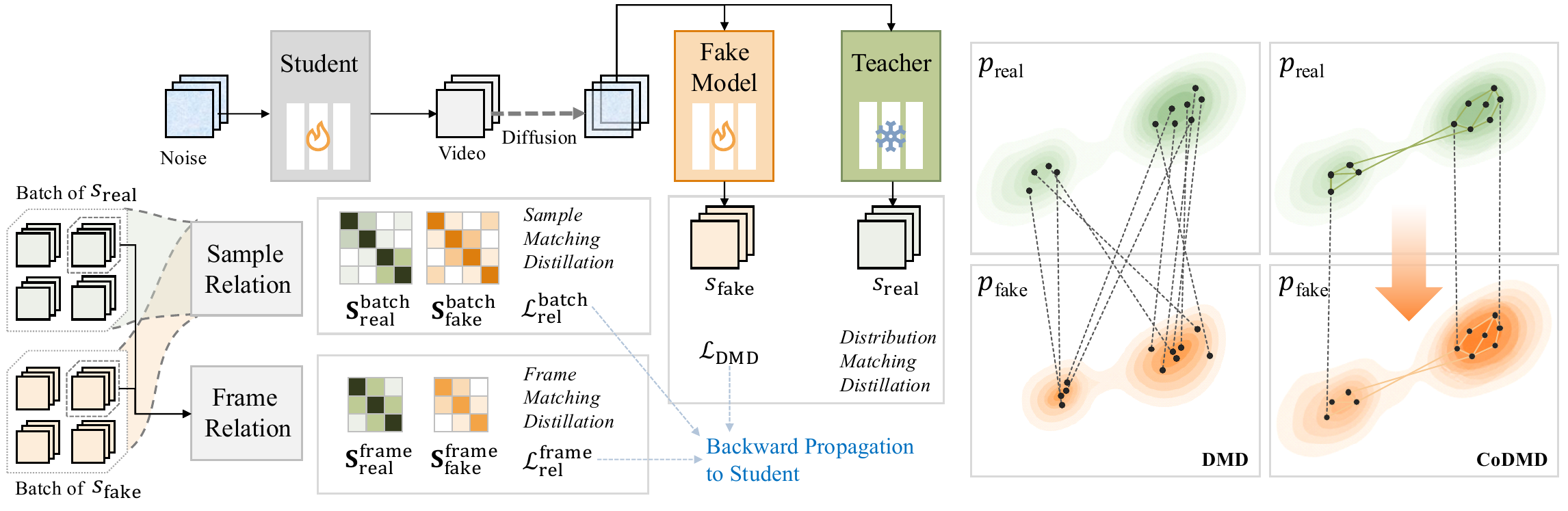}
  \caption{\emph{Left:} Training pipeline of \methodname{}. Beyond standard $\mathcal{L}_\text{DMD}$, we reuse the real and fake scores to build relational matrices at batch and frame granularities, yielding two supplementary copula losses $\mathcal{L}_\text{rel}^\text{batch}$ and $\mathcal{L}_\text{rel}^\text{frame}$.
    \emph{Right:} DMD aligns fake elements to $p_\text{real}$ but scrambles their pairwise geometry (crossed edges). \methodname{} additionally preserves the real structure among these elements.}

  \label{fig:method_overview}
\end{figure*}
\section{Method}
\label{sec:method}

We first review DMD as a \emph{score-as-target surrogate} (\S\ref{subsec:prelim}), then diagnose why this coordinate-wise recipe leaves the relational geometry of the teacher's score field less constrained (\S\ref{subsec:copula_theory}). Guided by this analysis, we propose \methodfull{} ((\S\ref{subsec:codmd})), which lifts the surrogate from coordinates to pairwise similarity matrices at two complementary granularities. Figure~\ref{fig:method_overview} gives an overview.

%-----------------------------------------------------------------------

\subsection{Preliminaries: DMD as a coordinate-wise score-as-target surrogate}
\label{subsec:prelim}

\paragraph{Notation.}
Let $\mathbf{x}_0 \sim p_\text{real}$ denote a clean video and $q_t(\mathbf{x}_t \mid \mathbf{x}_0) = \mathcal N(\alpha_t \mathbf{x}_0, \sigma_t^2 \mathbf I)$ a variance-preserving Gaussian diffusion process indexed by $t \in [0, 1]$. A pretrained teacher denoiser $\mu_\text{real}(\mathbf{x}_t, t)$ provides the perturbed score $s_\text{real}(\mathbf{x}_t, t) = -(\mathbf{x}_t - \alpha_t \mu_\text{real})/\sigma_t^2$. A few-step student generator $G_\theta : \mathbb R^{d_z} \to \mathbb R^{F \times C \times H \times W}$ produces $\mathbf{x} = G_\theta(\mathbf{z})$ from $\mathbf{z} \sim \mathcal N(\mathbf 0, \mathbf I)$, and an auxiliary denoiser $\mu_\text{fake}^\phi(\mathbf{x}_t, t)$ is trained online with $\mathcal L_\text{denoise}^\phi = \| \mu_\text{fake}^\phi(\mathbf{x}_t, t) - \mathbf{x} \|_2^2$ to track the outputs of students and provide the corresponding fake score $s_\text{fake}$.

\paragraph{Distribution-matching gradient.}
DMD minimizes the reverse-KL divergence $D_{\mathrm{KL}}(p_\text{fake} \,\|\, p_\text{real})$, whose gradient admits an expectation over coordinate-wise score differences~\cite{wang2023prolificdreamer}:
\begin{align}\label{eq:dmd_grad}
\nabla_\theta D_{\mathrm{KL}}
&\simeq \mathbb E_{\mathbf{z}, t} \Bigl[
    w_t \alpha_t \bigl( s_\text{fake}(\mathbf{x}_t, t) - s_\text{real}(\mathbf{x}_t, t) \bigr)
    \cdot \frac{\partial G_\theta(\mathbf{z})}{\partial \theta}
\Bigr],
\end{align}
where $\mathbf{x}_t$ denotes the diffused student output and $w_t$ represents a noise-level weighting.

\paragraph{Score-as-target surrogate.}
In practice, Eq.~\eqref{eq:dmd_grad} is realized through a reconstruction loss that avoids explicitly computing the student Jacobian during the forward pass. Define the update direction
\begin{equation}\label{eq:dmd_delta}
\boldsymbol{\Delta}(\mathbf{x}) \triangleq w_t \alpha_t
\bigl( s_\text{fake}(\mathbf{x}_t, t) - s_\text{real}(\mathbf{x}_t, t) \bigr),
\end{equation}
and the \emph{score-as-target surrogate} loss
\begin{equation}\label{eq:dmd_surrogate}
\mathcal L_{\mathrm{DMD}}
\;=\;
\frac12 \; \bigl\|
    \mathbf{x}
    \;-\;
    \operatorname{stopgrad}\!\bigl( \mathbf{x} - \boldsymbol{\Delta}(\mathbf{x}) \bigr)
\bigr\|_2^2.
\end{equation}
Since the target is treated as a constant, $\nabla_\theta \mathcal L_{\mathrm{DMD}} = \boldsymbol{\Delta}(\mathbf{x}) \cdot \partial_\theta G_\theta(\mathbf{z})$ exactly recovers Eq.~\eqref{eq:dmd_grad}: the KL gradient is realized by nudging $\mathbf{x}$ toward $\mathbf{x} - \boldsymbol{\Delta}(\mathbf{x})$. Crucially, $\boldsymbol{\Delta}$ is evaluated \emph{coordinate-wise}: at every spatial--temporal location it pulls the student's marginal density toward $p_\text{real}$ in isolation, with no term comparing values across locations.

%-----------------------------------------------------------------------

\subsection{Why coordinate-wise supervision is insufficient: a copula perspective}
\label{subsec:copula_theory}

We now formalize why the coordinate-wise structure of Eq.~\eqref{eq:dmd_surrogate} is consequential for video. Our argument proceeds in three steps: (i) Sklar's theorem isolates a \emph{copula score} that DMD does not explicitly supervise; (ii) reverse-KL amplifies this gap into mode collapse under the few-step constraint; (iii) a pairwise relational statistic admits a closed-form penalty for the resulting collapse.

\paragraph{Sklar's theorem and the marginal--copula factorization.}
Sklar's theorem~\cite{sklar1959fonctions,nelsen2006introduction} states that any continuous joint $p(\mathbf{y})$ on $\mathbb R^{D}$ admits a unique decomposition
\begin{equation}\label{eq:sklar}
p(\mathbf{y})
=
\Bigl( \prod_{i=1}^{D} f^{(i)}(y_i) \Bigr)
\; \cdot \;
c\!\bigl( F^{(1)}(y_1), \dots, F^{(D)}(y_D) \bigr),
\end{equation}

where $f^{(i)}, F^{(i)}$ are the marginal PDF (Probability Density Function) and CDF (Cumulative Distribution Function) of coordinate $i$, and $c : [0,1]^{D} \to \mathbb R_{\ge 0}$ is the copula density that encodes all cross-coordinate dependence and its underlying structure is invariant to coordinate-wise transformations of the marginals. Taking $\nabla \log p$ yields

\begin{equation}\label{eq:score_split}
\nabla_{\mathbf{y}} \log p(\mathbf{y})
=
\underbrace{\left[ \frac{\partial}{\partial y_i} \log f^{(i)}(y_i) \right]_{i=1}^{D}}_{=:\, s^{\text{marg}}(\mathbf{y})}
\;+\;
\underbrace{\nabla_{\mathbf{y}} \log c\!\bigl(F^{(1)}(y_1), \dots, F^{(D)}(y_D)\bigr)}_{=:\, s^{\text{cop}}(\mathbf{y})}.
\end{equation}

While DMD implicitly targets the joint score, its sample-wise coordinate-wise objective treats the regression of elements independently. This lacks explicit high-order relational constraints to directly supervise the complex cross-coordinate relations encoded in $s^{\text{cop}}$, making the structural knowledge transfer less efficient.

\paragraph{Marginal matching alone does not identify relational structure.}
A two-frame video $(X_1,X_2)$ with $U \sim \mathcal{N}(0,1)$ and
$P:(X_1,X_2)=(U,U)$, $Q:(X_1,X_2)=(U,-U)$
has identical per-frame marginals under $P$ and $Q$, yet $\mathbb{E}_P[X_1 X_2]=1$ versus $\mathbb{E}_Q[X_1 X_2]=-1$. Any objective that depends only on per-coordinate distributions assigns the same value to $P$ and $Q$, and is blind to which temporal relation the student learns (Appendix~\ref{sec:theory}). For video, this non-identifiability propagates to the failure cases in \autoref{fig:teaser}.

\paragraph{Reverse-KL amplifies the gap via mode-seeking behavior.}
DMD minimizes the \emph{reverse-}KL divergence, which is inherently mode-seeking~\cite{bishop2006pattern,xu2025one}. For a teacher mixture $P = \sum_{k=1}^{K} \pi_k P_k$ with disjoint supports and a student restricted to single-component approximants $\{P_1,\dots,P_K\}$,
$D_{\mathrm{KL}}(P_j \,\|\, P) = -\log \pi_j$,
so the optimum is the component with the largest mixture weight (Appendix~\ref{sec:theory}). More generally, $D_{\mathrm{KL}}(Q_j \,\|\, P) = D_{\mathrm{KL}}(Q_j \,\|\, P_j) - \log \pi_j$ for any $Q_j$ on component $j$: reverse-KL trades fit against weight but never penalizes the absence of other modes. In the few-step regime, a small NFE budget restricts the reachable student family, making this collapse directly actionable: the generator covers one dominant mode and drops others.

\paragraph{Relational matching penalizes collapsed solutions.}
Let $d(z,z')$ be a symmetric distance in the score space and $\mathcal{D}(Q) = \mathbb{E}_{Z,Z' \sim Q}[d(Z,Z')]$. If the teacher score distribution $P$ has modes with intra-mode distance $\le \epsilon$ and cross-mode distance $\ge \Delta$, then for any single-mode student $Q = P_j$,
\begin{equation}\label{eq:relational_gap_lb}
\mathcal{D}(P) - \mathcal{D}(Q)
\;\ge\;
\Delta\Bigl(1 - \sum\nolimits_{k=1}^{K} \pi_k^2\Bigr) - \epsilon.
\end{equation}
The factor $(1 - \sum_k \pi_k^2)$ measures the teacher's relational diversity, approaching~1 for many well-separated modes. A single-mode student therefore incurs a discrepancy that reverse-KL ignores but a pairwise objective directly measures. Eq.~\eqref{eq:relational_gap_lb} is the quantitative motivation for \methodname{}: an explicit relational regularizer assigns a non-trivial cost to the loss of cross-mode geometry that DMD alone leaves free. The next section realizes this signal within the existing DMD workflow.

%-----------------------------------------------------------------------

% ========================================================================
% Training algorithm: DMD + CoDMD with batch- and frame-level relational losses.
% Our additions are highlighted with \ourc{}, defined in the preamble as
%   \newcommand{\ourc}[1]{{\color{blue}#1}}
% ========================================================================
\begin{algorithm}
    \caption{\label{alg:ourmethod} \methodname{} training procedure.}
    \small
    \KwIn{
        Pretrained teacher $\mu_\text{real}$;
        denoising step list $\mathcal{T}=\{t_1,\dots,t_K\}$;
        % generator update ratio $r$;
        relational weights $\lambda_\text{batch},\lambda_\text{frame}$.
    }
    \KwOut{Trained few-step generator $G_\theta$.}
    
    $G_\theta \leftarrow \text{copyWeights}(\mu_\text{real}),\;
     \mu_\text{fake}^\phi \leftarrow \text{copyWeights}(\mu_\text{real})$ \tcp{Initialization}

    \While{not converged}{
        $\mathbf{x} \leftarrow G_\theta\bigl(\mathbf{z} \sim \mathcal{N}(\mathbf{0},\mathbf{I}),\;\mathcal{T}\bigr)$
            \tcp*{few-step backward simulation}

        \If{ \emph{is generator update step}}{
            $\mathcal{L}_{\mathrm{DMD}} \leftarrow \textsc{DMDLoss}(\mu_\text{real}, \mu_\text{fake}^\phi, \mathbf{x})$
                \tcp*{Eq.~\eqref{eq:dmd_surrogate}}
            \ourc{$\mathcal{L}_\text{rel}^\text{batch} \leftarrow \mathcal{L}_\text{rel}^{general}(\mu_\text{real}, \mu_\text{fake}^\phi, \mathbf{x},\, \text{batch})$}
                \tcp*{\ourc{Eq.~\eqref{eq:rel_batch}}}
            \ourc{ $\mathcal{L}_\text{rel}^\text{frame} \leftarrow \mathcal{L}_\text{rel}^{general}(\mu_\text{real}, \mu_\text{fake}^\phi, \mathbf{x},\, \text{frame})$ }
                \tcp*{\ourc{Eq.~\eqref{eq:rel_frame}}}
            $\mathcal{L}_\text{total} \leftarrow \mathcal{L}_{\mathrm{DMD}} + \ourc{\lambda_\text{batch}\,\mathcal{L}_\text{rel}^\text{batch} + \lambda_\text{frame}\,\mathcal{L}_\text{rel}^\text{frame}}$
                \tcp*{Eq.~\eqref{eq:total_obj}}
            $G_\theta \leftarrow \text{update}(G_\theta,\, \mathcal{L}_\text{total})$
        }

        Sample $t \sim \mathcal{U}(0,1)$;\;
        $\mathbf{x}_t \leftarrow \text{forwardDiffusion}(\text{stopgrad}(\mathbf{x}), t)$\;
        $\mathcal{L}_\text{denoise}^\phi \leftarrow
            \| \mu_\text{fake}^\phi(\mathbf{x}_t, t) - \text{stopgrad}(\mathbf{x}) \|_2^2$
            \tcp*{update fake score every step}
        $\mu_\text{fake}^\phi \leftarrow \text{update}(\mu_\text{fake}^\phi, \mathcal{L}_\text{denoise}^\phi)$
    }
\end{algorithm}
\vspace{-0.2pt}

\subsection{\methodfull{} (\methodname)}
\label{subsec:codmd}
The analysis above motivates a regularizer based on pairwise statistics. We instantiate it with \emph{cosine similarity matrices} computed directly from the score-based denoiser predictions. 
% These matrices could capture pairwise relational geometry, are less invariant to per-frame or per-sample scale, and serve as a tractable proxy for the copula structure analyzed in \S\ref{subsec:copula_theory}. 
% We then lift the DMD score-as-target recipe from coordinates to these matrices, following the same three-step pattern as \S\ref{subsec:prelim}: form a relational analogue of $\mathbf{x}$, compute a relational analogue of the score difference, and realize the gradient via a reconstruction loss with a stop-gradient target.

\paragraph{Relational analogue of $\mathbf{x}$: the similarity matrix.}
Every DMD training step evaluates three quantities that carry distribution-level information: the student output $\mathbf{x} = G_\theta(\mathbf{z})$, the teacher prediction $\mu_\text{real}(\mathbf{x}_t, t)$, and the fake model prediction $\mu_\text{fake}^\phi(\mathbf{x}_t, t)$. We construct pairwise cosine similarity matrices from these score-space outputs at two granularities.

At the \emph{batch level}, each generated sample $\mathbf{x}^{(b)} \in \mathbb R^{F \times C \times H \times W}$ is averaged over its temporal and spatial dimensions ($F, H, W$), preserving the channel dimension to form a $C$-dimensional vector. This yields $B$ vectors per quantity. After gathering across all GPUs to $B_\text{g}$ vectors, we compute the $B_\text{g} \times B_\text{g}$ cosine similarity matrix, where pairwise similarities are measured along the $C$ channels. At the \emph{frame level}, only the spatial dimensions ($H, W$) of each sample are averaged per frame, while the channel dimension remains intact, yielding $F$ vectors of size $C$ per sample. An $F \times F$ cosine similarity matrix is then computed per sample to capture inter-frame relations. As shown in Fig.~\ref{fig:method_overview}, we denote the resulting similarity matrices as $\mathbf{S}_\text{stu}^\text{batch}$, $\mathbf{S}_\text{real}^\text{batch}$, $\mathbf{S}_\text{fake}^\text{batch}$ (batch level) and $\mathbf{S}_\text{stu}^{\text{frame}}$, $\mathbf{S}_\text{real}^{\text{frame}}$, $\mathbf{S}_\text{fake}^{\text{frame}}$ (frame level, per sample $b$), built respectively from $\mathbf{x}$, $ s_\text{fake}(\mathbf{x}_t, t)$, and $s_\text{real}(\mathbf{x}_t, t) $.

\paragraph{Relational analogue of the score difference.}
In standard DMD, the perturbation $\boldsymbol{\Delta}(\mathbf{x})$ is the coordinate-wise score difference $s_\text{fake} - s_\text{real}$. Its relational counterpart is simply
\begin{equation}\label{eq:rel_delta}
\boldsymbol{\Delta}_{\!\mathbf{S}}
\;\triangleq\;
\mathbf{S}_\text{fake} - \mathbf{S}_\text{real},
\end{equation}
which measures how the fake-model's pairwise relations deviate from those of the teacher. This difference is computed separately for the batch-level and frame-level matrices.

\paragraph{Relational score-as-target surrogate.}
We form a detached target by nudging $\mathbf{S}_\text{stu}$ along $-\boldsymbol{\Delta}_{\!\mathbf{S}}$ and define the relational loss as the KL divergence between the softmax distributions:
\begin{equation}\label{eq:rel_loss_general}
\mathcal{L}_\text{rel}^{general}
\;\triangleq\;
\operatorname{KL}\!\bigl( \mathbf{P}_\text{tgt} \;\big\|\; \mathbf{P}_\text{stu} \bigr),
\quad\text{with}\quad
\mathbf{P}_\text{tgt} \triangleq \operatorname{softmax}\!\bigl( \tau^{-1} [\mathbf{S}_\text{stu} - \boldsymbol{\Delta}_{\!\mathbf{S}}] \bigr),
\;\;
\mathbf{P}_\text{stu} \triangleq \operatorname{softmax}\!\bigl( \tau^{-1} \mathbf{S}_\text{stu} \bigr),
\end{equation}
where $\tau > 0$ is a temperature and the gradient through $\mathbf{P}_\text{tgt}$ is stopped. A first-order expansion in $\tau^{-1}$ shows that the gradient of Eq.~\eqref{eq:rel_loss_general} with respect to $\mathbf{S}_\text{stu}$ is proportional to $\boldsymbol{\Delta}_{\!\mathbf{S}} = \mathbf{S}_\text{fake} - \mathbf{S}_\text{real}$ (see Appendix~\ref{app:rel_grad_derivation}). Chaining through $\mathbf{S}_\text{stu}$ and back to $G_\theta$ yields a gradient of the same form as Eq.~\eqref{eq:dmd_grad}, with the coordinate-wise difference $(s_\text{fake} - s_\text{real})$ replaced by the relational difference $(\mathbf{S}_\text{fake} - \mathbf{S}_\text{real})$.

We instantiate the above construction at two granularities that correspond to the two under-constrained axes identified in \S\ref{subsec:copula_theory}.

Applying Eq.~\eqref{eq:rel_loss_general} to the batch-level similarity matrices over $B_\text{g}$ videos gives
\begin{equation}\label{eq:rel_batch}
\mathcal L_\text{rel}^\text{batch}
\;=\;
\operatorname{KL}\!\bigl(
  \mathbf{P}_\text{tgt}^\text{batch} \;\big\|\;
  \mathbf{P}_\text{stu}^\text{batch}
\bigr).
\end{equation}
This term injects batch-level relational supervision that is absent from the coordinate-wise DMD gradient. By aligning the pairwise relational geometry of the student to the teacher--critic residual on the same batch, it constrains how the student's score field varies across the samples drawn in each step---the axis that the coordinate-wise gradient leaves free.

Applying the same construction to the per-sample frame-level matrices over $F$ frames gives
\begin{equation}\label{eq:rel_frame}
\mathcal L_\text{rel}^\text{frame}
\;=\;
\frac{1}{B} \sum_{b=1}^{B}
\operatorname{KL}\!\bigl(
  \mathbf{P}_\text{tgt}^{\text{frame},(b)} \;\big\|\;
  \mathbf{P}_\text{stu}^{\text{frame},(b)}
\bigr).
\end{equation}
No cross-GPU communication is required for this term. It directly counters the motion degradation diagnosed in \S\ref{subsec:copula_theory} by enforcing that temporal similarity patterns follow the teacher--fake residual.

\paragraph{Final objective.}
As standard DMD, the fake model $\mu_\text{fake}^\phi$ is trained with the denoising loss $\mathcal L_\text{denoise}^\phi$ to track the moving generator distribution. For the generator $G_\theta$, the final objective is
\begin{equation}\label{eq:total_obj}
\mathcal L_\text{total}
\;=\;
\mathcal L_{\mathrm{DMD}}
\;+\;
\lambda_\text{batch} \, \mathcal L_\text{rel}^\text{batch}
\;+\;
\lambda_\text{frame} \, \mathcal L_\text{rel}^\text{frame}.
\end{equation}
where $\mathcal L_{\mathrm{DMD}}$ denotes the coordinate-wise DMD surrogate in Eq.~\eqref{eq:dmd_surrogate}, and $\lambda_\text{batch}, \lambda_\text{frame} \ge 0$ are weighting hyperparameters. Algorithm~\ref{alg:ourmethod} summarizes the full training procedure. Since the similarity matrices are introduced only during training, inference remains identical to that of the original few-step generator.

\begin{table*}[t]
  \centering
  \scriptsize
  \caption{VBench results for Wan-2.1-T2V on 480p resolution. A full per-attribute breakdown is provided in Appendix~\ref{app:vbench_detail}, where \methodname{} leads on the majority of individual VBench dimensions.}
  \label{tab:vbench}
  \setlength{\tabcolsep}{8.5pt}
  \begin{tabular}{lccc|cc|c}
    \toprule
    \textbf{Model} & \textbf{Resolution} & \textbf{NFE} &
    \textbf{\makecell[c]{TimeCost}$\downarrow$ } & % #(s$\downarrow$)
    \textbf{\makecell[c]{Quality\\Score$\uparrow$}} &
    \textbf{\makecell[c]{Semantic\\Score$\uparrow$}} &
    \textbf{\makecell[c]{Total\\Score$\uparrow$}} \\
    \midrule
    % ------------------------------------------------------------------
    \multicolumn{7}{c}{\textit{Wan2.1-T2V-1.3B}} \\
    \midrule
    Teacher (Pretrained) & $832\!\times\!480\!\times\!81$ & $50{\times}2$ & 270s & 84.43 & 80.73 & 83.69 \\
    \midrule
    DMD~\cite{yin2024one} & $832\!\times\!480\!\times\!81$ & 4 & 11s & 84.53 & 78.78 & 83.38 \\
    DCM~\cite{lv2025dual} & $832\!\times\!480\!\times\!81$ & 6 & 18s & 83.12 & 74.33 & 81.36 \\
    sCM~\cite{lu2024simplifying} & $832\!\times\!480\!\times\!81$ & 6 & 18s & 83.42  & 72.98 & 81.33   \\
    rCM~\cite{zheng2025large} & $832\!\times\!480\!\times\!81$ & 4 & 11s & 83.72 & 79.18 & 82.81 \\
    AVD~\cite{you2026adaptive} & $832\!\times\!480\!\times\!81$ & 4 & 11s & 84.73 & 79.80 & 83.75 \\
    \rowcolor{ourhighlight}  \methodname{} (Ours)& $832\!\times\!480\!\times\!81$ & 4 & 11s & \textbf{85.50} & \textbf{80.27} & \textbf{84.46} \\
    
    \midrule
    % ------------------------------------------------------------------
    \multicolumn{7}{c}{\textit{Wan2.1-T2V-14B}} \\
    \midrule
    Teacher (Pretrained) & $832\!\times\!480\!\times\!81$ & $50{\times}2$ & 710s & 84.71 & 81.75 & 84.12 \\
    \midrule
    DMD~\cite{yin2024one}            & $832\!\times\!480\!\times\!81$ & 4 & 27s & 84.56 & 80.84 & 83.81 \\
    sCM~\cite{lu2024simplifying}     & $832\!\times\!480\!\times\!81$ & 6 & 44s & 83.48 & 74.01 & 81.52 \\
    rCM~\cite{zheng2025large}        & $832\!\times\!480\!\times\!81$ & 4 & 27s & 84.97 & 80.35 & 84.05 \\
    AVD~\cite{you2026adaptive}       & $832\!\times\!480\!\times\!81$ & 4 & 27s & 84.19 & 82.07 & 83.89 \\
    \rowcolor{ourhighlight}  \methodname{} (Ours)             & $832\!\times\!480\!\times\!81$ & 4 & 27s & \textbf{85.55} & \textbf{82.12} & \textbf{84.87} \\
    \bottomrule
  \end{tabular}
\end{table*}

% \vspace{-0.4cm}

%------------
\section{Experiments}
\label{sec:exp}

\subsection{Experimental setup}
\label{subsec:exp_setup}

\paragraph{Models \& Data.}
We distill the official Wan-2.1-T2V teachers (1.3B and 14B)~\cite{wan2025wan}, both released as 50-step models with classifier-free guidance ($50{\times}2$ NFE), into 4-step students. All clips are generated at $832{\times}480$, 81 frames (5\,s at 16\,fps). The distribution-matching loss uses text descriptions from a mixed corpus of public videos and never accesses the raw videos themselves, following~\cite{yin2024one,yin2024improved}.

\paragraph{Training \& Evaluation.}
We train all models using the AdamW optimizer with $\beta_1=0.9$ and $\beta_2=0.999$, a classifier-free guidance (CFG) scale of 3.5, and BF16 mixed-precision training on 32 GPUs. The relational temperature is set to $\tau=0.1$, and the two relational weights are fixed at $\lambda_{\text{batch}}=\lambda_{\text{frame}}=0.1$. Full training hyperparameters are provided in Table~\ref{tab:training-config} in the Appendix.

For evaluation, we follow the VBench protocol~\cite{huang2023vbench,zheng2025vbench2} and report quality, semantic, and total scores. All baselines are re-evaluated in a unified codebase under BF16 precision for a fair comparison. Specifically, we use the official checkpoints for rCM and DCM. Since the official weights for sCM and AVD are not publicly available, we evaluate our reproduced implementations of these methods based on their released codebases and paper descriptions. DMD serves as our primary strong baseline.

%-----------------------------------------------------------------------

\subsection{Main Results}
\label{subsec:comp_sota}

\paragraph{Quantitative comparison.}
Table~\ref{tab:vbench} reports VBench results against trajectory-based samplers (sCM~\cite{lu2024simplifying}, rCM~\cite{zheng2025large}) and distribution-based methods (DMD~\cite{yin2024one}, DCM~\cite{lv2025dual}, AVD~\cite{you2026adaptive}). Our \methodname{} achieves the best total score at both model scales, reaching \textbf{84.46} on 1.3B and \textbf{84.87} on 14B. Compared with the DMD baseline, the gains are about $+1.1$ points at both scales. The full per-attribute breakdown is provided in Appendix~\ref{app:vbench_detail}, where \methodname{} leads on the majority of individual VBench dimensions.
The improvement is more pronounced on the semantic axis than on the quality axis. This trend matches the diagnosis, that the semantic aspects such as spatial layout, object persistence, and motion plausibility depend more strongly on relational geometry, which is explicitly constrained by our copula-aware supervision.

\begin{figure}[t]
\centering
\includegraphics[width=\linewidth]{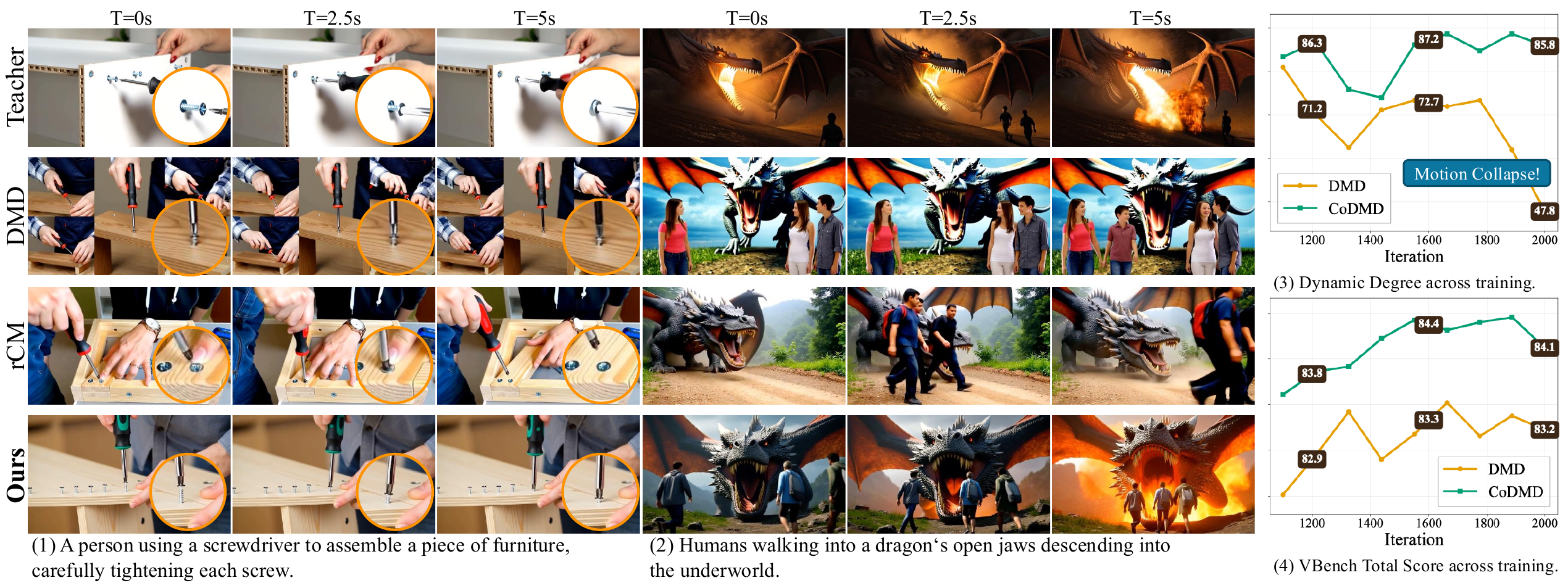}
\caption{\emph{Left:} Our \methodname{} produces more faithful motion, stronger prompt alignment, and more realistic color than previous methods (i.e., DMD and rCM). Owing to the copula-aware relational objective, our output also matches the teacher's visual structure more closely, e.g., (1)screw turns inward; (2) dragon exhales fire.
\emph{Right:} VBench metrics across training iterations. {CoDMD} sustains high scores throughout training, while {DMD} undergoes a sharp collapse in dynamic degree.}
\label{fig:qualitative}
\end{figure}

\textbf{Qualitative comparison.}
We compare videos generated by DMD, rCM, and \methodname{} in Fig.~\ref{fig:teaser} and Fig.~\ref{fig:qualitative}~(left). DMD shows several recurring failure modes, including static camera motion, unnaturally slow dynamics, structural breakdown, and oversaturation. rCM improves some of these issues but introduces its own artifacts, especially broken hand structures and semantic misalignments. In contrast, \methodname{} produces more faithful motion, better semantic alignment, and more realistic color rendering. 
\begin{table}[b]
  \centering
  \scriptsize
  \caption{Ablation study on relational–loss variants.
           We add the batch–level loss $\mathcal{L}_{\text{rel}}^{\text{batch}}$
           and the frame–level loss $\mathcal{L}_{\text{rel}}^{\text{frame}}$
           to the DMD baseline individually and jointly.
           The last row (“Variant”) applies the relational objective
           directly between student and teacher features (no score–difference term).}
  \label{tab:ablation}
  \setlength{\tabcolsep}{8pt}
  \begin{tabular}{l c | c | c c | c}
    \toprule
    \textbf{Method} & \textbf{NFE} &
    \textbf{Dynamic$\uparrow$} & \textbf{Quality$\uparrow$} &
    \textbf{Semantic$\uparrow$}  & \textbf{Total$\uparrow$} \\
    \midrule
    % Teacher (Pretrained)                                            & $50{\times}2$ & --      & --      & --      & --      \\
    DMD                                                       & 4    & 71.1            & 84.5            & 78.8            & 83.4            \\
    DMD + $\mathcal{L}_{\text{rel}}^{\text{batch}}$           & 4    & 83.1            & 85.1            & 79.4            & 84.0            \\
    DMD + $\mathcal{L}_{\text{rel}}^{\text{frame}}$           & 4    & \textbf{93.3}   & 85.0            & 76.8            & 83.9            \\
    \rowcolor{ourhighlight} \textbf{DMD + $\mathcal{L}_{\text{rel}}$  (Ours)}         & 4    & 86.1            & \textbf{85.5}   & \textbf{80.3}   & \textbf{84.5}   \\
    DMD + $\mathcal{L}_{\text{rel}}^{\text{direct}}$ (Variant)& 4    & 88.6            & 84.7            & 79.0            & 83.8            \\
    \bottomrule
  \end{tabular}
\end{table}

\textbf{Copula-aware loss analysis.}
Fig.~\ref{fig:qualitative} further dissects how the two granularities of our relational loss provide complementary regularization. On the left, the batch-level copula loss enforces the student to preserve the relational geometry among video samples defined by the teacher. This explicitly aligns the student's output with the teacher's visual structure: (1) the \emph{screw's inward rotation} and (2) the \emph{dragon's fire-breathing} faithfully mirror the teacher's visual patterns, confirming that sample-level relational matching recovers the copula structure. On the right, DMD undergoes a \emph{sharp mode collapse} in dynamic degree during training, a direct symptom of reverse-KL mode-seeking that drives the generator toward static, low-motion outputs. CoDMD sustains consistently high dynamic degree throughout training, demonstrating that the frame-level relational penalty effectively mitigates motion collapse and preserves the teacher's temporal diversity.

\paragraph{User study.}
\begin{wrapfigure}{r}{0.61\textwidth}
  \centering
  \vspace{-12pt}
  \includegraphics[width=0.61\textwidth]{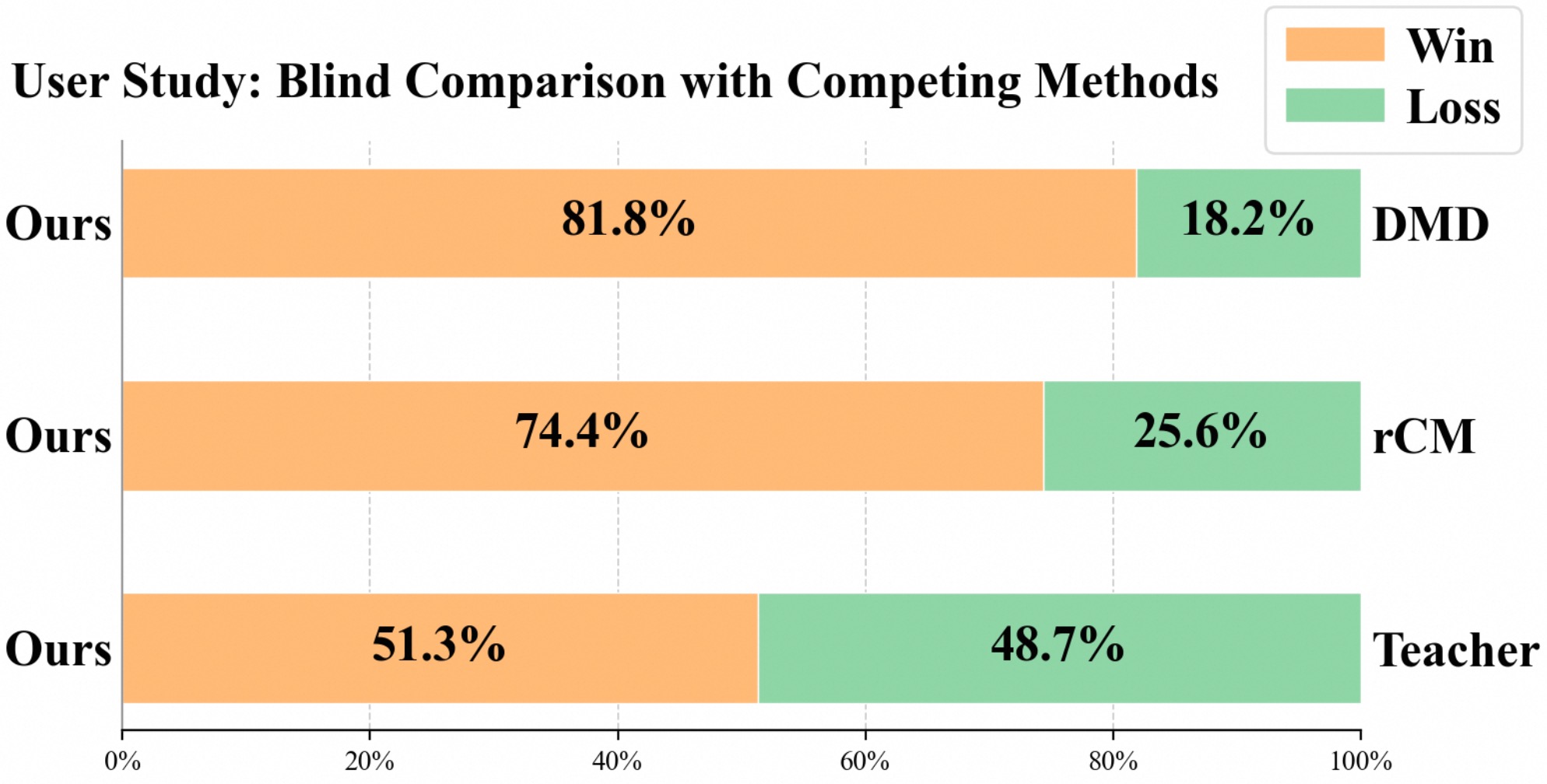}
  \vspace{-6pt}
  \caption{User study results. We report the win rate of \methodname{} against others in pairwise blind comparisons.}
  \label{fig:user_study}
  \vspace{-10pt}
\end{wrapfigure}
To complement the automated metrics, we conduct a blind pairwise user study with 25 evaluators. Each participant is presented with a pair of videos generated from the same prompt in randomised left--right order, and asked to select the one with better \emph{visual quality} and \emph{semantic alignment}. As shown in Fig.~\ref{fig:user_study}, \methodname{} is preferred over DMD in 81.8\% of comparisons and over rCM in 74.4\%, confirming that our distilled model delivers perceptually meaningful improvements over existing methods. Against the 50-step teacher, the win rate reaches 51.3\%, indicating that our methods closely approaches the full-step generation quality. Details are available in Appendix.~\ref{app:user_study_details}.

\subsection{Ablation studies}
\label{subsec:ablation}

\paragraph{The two granularities are complementary.}
Table~\ref{tab:ablation} isolates each component under the same setting. The DMD baseline reaches 83.4 in total score with dynamic degree 71.1. Adding only $\mathcal{L}_\text{rel}^\text{batch}$ improves all metrics and raises the total to 84.0, while adding only $\mathcal{L}_\text{rel}^\text{frame}$ yields the largest motion gain (dynamic degree 93.3) but reduces semantic score by 2.0, indicating that temporal relational matching alone can over-emphasize motion at the expense of prompt alignment. Combining both losses gives the best total of 84.5, with dynamic degree settling at 86.1 and semantic score peaking at 80.3, consistent with the complementary roles analyzed in \S\ref{subsec:copula_theory}.

\paragraph{The score-as-target form is essential.}
Replacing the teacher--fake residual with direct matching between student and teacher score similarity matrices lowers the total score to 83.8, only slightly above DMD. As discussed in Appendix~\ref{app:rel_grad_derivation}, direct relational matching removes the real-versus-fake distribution-matching signal and therefore provides much weaker supervision.

\begin{wraptable}{r}{0.45\textwidth}
  \vspace{-0.6cm}
  \centering
  \caption{Sensitivity to $\lambda $ on (1.3B, 4-step).}
  \label{tab:ablation_weight}
  \scriptsize
  \setlength{\tabcolsep}{7pt}
  \begin{tabular}{c|ccc}
    \toprule
    $\lambda_\text{batch}, \lambda_\text{frame}$ & \textbf{Qual.$\uparrow$} & \textbf{Sem.$\uparrow$} & \textbf{Total$\uparrow$} \\
    \midrule
    0 (DMD)  & 84.53 & 78.78 & 83.38 \\
    \rowcolor{ourhighlight} 0.1      & \textbf{85.50} & 80.27 & \textbf{84.46} \\
    0.2      & 85.04 & \textbf{80.42} & 84.12 \\
    0.5      & 84.79 & 80.40 & 83.91 \\
    1.0      & 83.87 & 78.00 & 82.70 \\
    \bottomrule
  \end{tabular}
  \vspace{-0.3cm}
\end{wraptable}

\paragraph{Sensitivity to relational loss weight.}
Table~\ref{tab:ablation_weight} varies the shared weight
$\lambda = \lambda_\text{batch} = \lambda_\text{frame}$ from 0 (pure DMD)
to 1.0. Any nonzero $\lambda$ improves over the DMD baseline, with
the best total score at $\lambda{=}0.1$ and the best semantic score at
$\lambda{=}0.2$. Performance remains robust across $[0.1, 0.5]$, confirming
that the relational objective provides a stable benefit without sensitive
tuning. At $\lambda{=}1.0$ the relational term overwhelms the
coordinate-wise DMD signal, causing a notable drop in scores. We adopt $\lambda{=}0.1$ as the default throughout
all other experiments.
\section{Conclusion}
\label{sec:conclusion}

We identified that standard DMD's objective leaves the copula structure across batch elements and temporal frames less constrained. Combined with reverse-KL mode-seeking, this gap drives few-step video generators toward layout instability, oversaturation, and motion collapse. We addressed it with \methodfull{}, which lifts DMD's score-as-target surrogate from coordinates to pairwise similarity matrices at batch and frame granularities, reusing score predictions already computed at each step without extra networks, data, or inference overhead.
On Wan-2.1-T2V series, \methodname{} distills 50-step teachers into 4-step students that outperform prior methods on VBench across different attributes. Ablations confirm the two granularities are complementary and the score-as-target formulation is essential. 
We discuss the limitations and future works in Appendix~\ref{sec:limitations}.

% , further supporting our copula-aware view of fast video distillation.

\clearpage
\newpage

\clearpage
{
\small
\bibliographystyle{plainnat}
\bibliography{main}
}

%%%%%%%%%%%%%%%%%%%%%%%%%%%%%%%%%%%%%%%%%%%%%%%%%%%%%%%%%%%%

\clearpage
\newpage

\appendix
%%%%%%%%%%%%%%%%%%%%%%%%%%%%%%%%%%%%%%%%%%%%%%%%%%%%%%%%%%%%%%%%%%%%%%%%
% Appendix file for CoDMD.
% Include in main.tex via:  \appendix \input{appendix}
%%%%%%%%%%%%%%%%%%%%%%%%%%%%%%%%%%%%%%%%%%%%%%%%%%%%%%%%%%%%%%%%%%%%%%%%

\newtheorem{proposition}{Proposition}

% \section*{\ourc{Appendix for }\\
% CoDMD: Copula-aware Distribution Matching Distillation}

% \thispagestyle{plain}
\begin{center}
    {\LARGE \textbf{Appendix} }
    \vspace{0.1cm}
    {\large for Copula-aware Distribution Matching Distillation }
    % \vspace{0.8cm}
\end{center}

% \noindent
% This appendix provides supplementary material organized as follows:

% \vspace{0.3cm}
\noindent
\begin{tabular}{@{}p{0.10\linewidth} p{0.80\linewidth}@{}}
    \hyperlink{app:visual}{\S A} & \textbf{More Visual Comparisons} \\[0.25cm]
    \hyperlink{app:hyperparams}{\S B} & \textbf{Training Hyperparameters} (Table~\ref{tab:training-config}) \\[0.25cm]
    \hyperlink{app:vbench_detail}{\S C} & \textbf{Full VBench Attribute Breakdown} (Table~\ref{tab:vbench_full})  \\[0.25cm]
    \hyperlink{app:User}{\S D} & \textbf{User Study Details} \\[0.25cm]
    \hyperlink{app:hyperparams}{\S E} & \textbf{Compute Resources} \\[0.25cm]
    \hyperlink{app:theory}{\S F} & \textbf{Theoretical Analysis} \\[0.08cm]
    & \quad \hyperlink{app:theory:c1}{F.1}\;\; Marginal matching does not identify relational structure \\[0.05cm]
    & \quad \hyperlink{app:theory:c2}{F.2}\;\; Reverse-KL selects dominant modes under restricted families \\[0.05cm]
    & \quad \hyperlink{app:theory:c3}{F.3}\;\; Relational matching penalizes collapsed solutions \\[0.05cm]
    & \quad \hyperlink{app:theory:c4}{F.4}\;\; Gradient derivation of the relational loss \\[0.25cm]
    \hyperlink{app:limit}{\S G} & \textbf{Limitation and Future work}  \\[0.25cm]
    \hyperlink{app:notation}{\S H} & \textbf{Notation Table} (Table~\ref{tab:notation})
\end{tabular}

% \vspace{0.5cm}
% \noindent
\rule{\linewidth}{0.5pt}
\vspace{-0.5cm}

\hypertarget{app:visual}{}
\section{More visual comparisons.}
% We also provide \ourc{demo videos in the Supplementary Material} together with a local webpage that enables convenient comparison, and we encourage reviewers to examine them.

\begin{figure*}[h!]
  \centering
    \includegraphics[width=\linewidth]{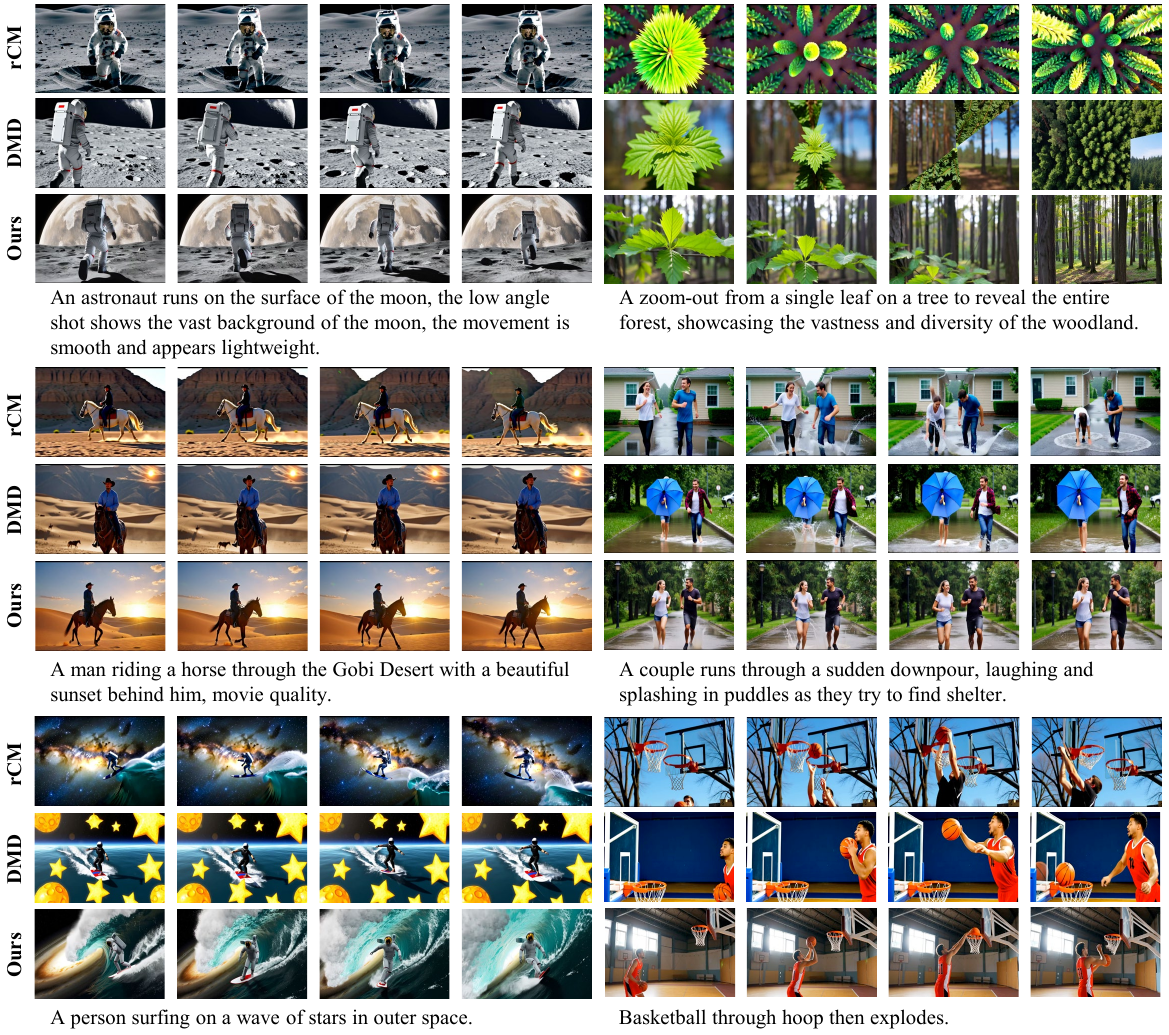}
\end{figure*}

\begin{figure*}[h!]
  \centering
    \includegraphics[width=\linewidth]{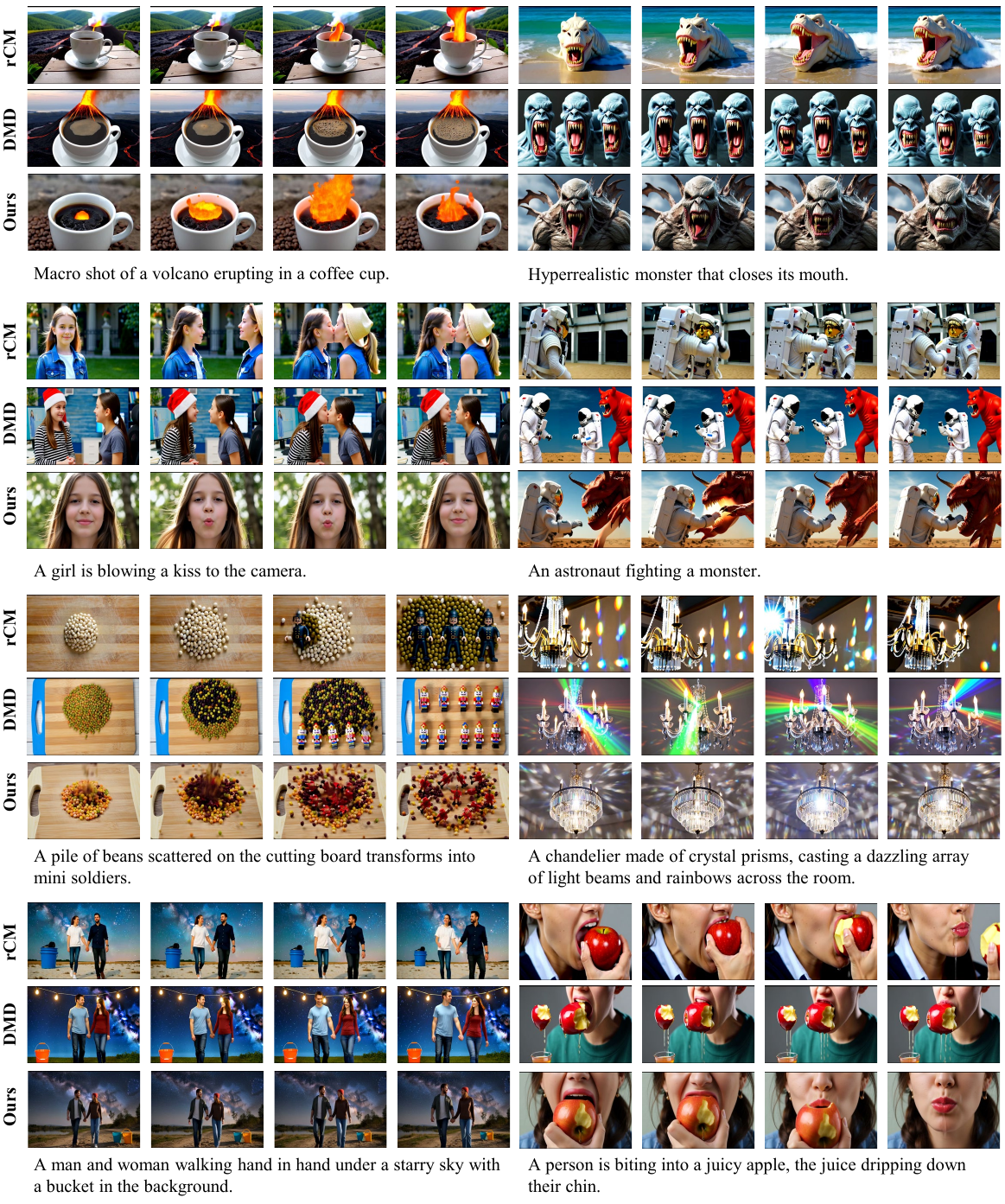}
\end{figure*}

\hypertarget{app:hyperparams}{}
\section{Training Hyperparameters}
\label{sec:train_config}

\begin{table}[h!]
  \centering
  \caption{Training configurations for Wan2.1 T2V.}
  \label{tab:training-config}
  \begin{tabular}{lcc}
    \toprule
    \textbf{Hyper-parameter} & \textbf{1.3B} & \textbf{14B} \\
    \midrule
    % EMA Length                     & 0.05  & 0.05  \\
    Batch Size                     & 128   & 64    \\
    % Context Parallel Size          & 1     & 10    \\
    Learning Rate (student)        & 2e-6  & 2e-6  \\
    Learning Rate (fake score)     & 4e-7  & 4e-7  \\
    CFG Scale                      & 3.5   & 3.5   \\
    Student Update Frequency       & 5     & 5    \\
    Maximal Simulation Steps       & 4     & 4     \\
    % Tangent Warm-up Iterations     & 1000  & 200   \\
    % Total Iterations               & 5k   & 5k   \\
    % $\sigma_{\max}$                & 1600  & 1600  \\
    \bottomrule
  \end{tabular}
\end{table}

\hypertarget{app:vbench_detail}{}
\section{Full VBench Attribute Breakdown}
\label{app:vbench_detail}

Table~\ref{tab:vbench_full} reports the per-attribute VBench scores for
all compared methods at both the 1.3B and 14B scales.
\methodname{} achieves the highest total score at both scales and leads
on the majority of individual attributes, with the most pronounced
gains on \emph{dynamic degree}, \emph{color}, \emph{multiple objects},
and \emph{spatial relationship}. The teacher retains advantages on a few
stability-oriented metrics such as \emph{background consistency} and
\emph{temporal flickering}, which reflects the inherent trade-off
between motion richness and frame-level steadiness under reduced NFE.

\begin{table*}[t]
  \centering
  \caption{Full VBench attribute-level results on Wan-2.1-T2V at 1.3B and 14B scales. All scores are reported as percentages. \colorbox{ourhighlight}{Orange} columns denote our \methodname{}.}
  \label{tab:vbench_full}
  \setlength{\tabcolsep}{5pt}
  \renewcommand{\arraystretch}{1.05}
  \definecolor{ourhighlight}{RGB}{255,237,215}
  \begin{tabular}{l|cccc|cccc}
    \toprule
    & \multicolumn{4}{c|}{\textit{Wan2.1-T2V-1.3B}}
    & \multicolumn{4}{c}{\textit{Wan2.1-T2V-14B}} \\
    \textbf{Attribute}
    & Teacher & rCM & DMD & \cellcolor{ourhighlight}\textbf{Ours}
    & Teacher & rCM & DMD & \cellcolor{ourhighlight}\textbf{Ours} \\
    \midrule
    Subject Consistency      & 93.46 & 94.11 & 95.18 & \cellcolor{ourhighlight}\textbf{96.08} & 93.58 & 94.87 & 96.14 & \cellcolor{ourhighlight}95.92 \\
    Background Consistency   & 96.29 & 93.88 & 94.65 & \cellcolor{ourhighlight}94.23 & \textbf{96.76} & 94.44 & 95.44 & \cellcolor{ourhighlight}95.03 \\
    Temporal Flickering      & 99.05 & 97.53 & \textbf{99.30} & \cellcolor{ourhighlight}98.26 & 98.90 & 97.66 & \textbf{99.21} & \cellcolor{ourhighlight}98.37 \\
    Motion Smoothness        & 97.97 & 97.75 & 98.50 & \cellcolor{ourhighlight}\textbf{98.67} & 98.02 & 97.83 & 98.54 & \cellcolor{ourhighlight}\textbf{98.51} \\
    Dynamic Degree           & 67.22 & 83.06 & 71.11 & \cellcolor{ourhighlight}\textbf{86.11} & 68.61 & 85.28 & 64.44 & \cellcolor{ourhighlight}\textbf{81.39} \\
    Aesthetic Quality        & \textbf{68.94} & 64.60 & 64.89 & \cellcolor{ourhighlight}65.53 & \textbf{69.28} & 66.78 & 66.36 & \cellcolor{ourhighlight}66.94 \\
    Imaging Quality          & 67.58 & 66.80 & \textbf{68.05} & \cellcolor{ourhighlight}67.99 & 67.82 & \textbf{69.37} & 68.04 & \cellcolor{ourhighlight}68.62 \\
    \midrule
    Object Class             & 91.01 & 93.86 & 94.05 & \cellcolor{ourhighlight}\textbf{95.11} & 92.63 & 96.39 & 96.58 & \cellcolor{ourhighlight}\textbf{96.28} \\
    Multiple Objects         & 78.17 & 78.35 & 78.72 & \cellcolor{ourhighlight}\textbf{81.88} & 83.58 & 82.50 & 84.36 & \cellcolor{ourhighlight}\textbf{86.08} \\
    Human Action             & 96.80 & 97.20 & 96.80 & \cellcolor{ourhighlight}\textbf{97.60} & 98.40 & 99.20 & \textbf{99.40} & \cellcolor{ourhighlight}\textbf{99.40} \\
    Color                    & 85.00 & 82.82 & 83.76 & \cellcolor{ourhighlight}\textbf{86.52} & 85.35 & 82.63 & 80.37 & \cellcolor{ourhighlight}\textbf{85.19} \\
    Spatial Relationship     & 81.12 & 76.86 & 76.24 & \cellcolor{ourhighlight}\textbf{78.41} & 81.96 & 78.19 & 79.30 & \cellcolor{ourhighlight}\textbf{84.00} \\
    Scene                    & \textbf{56.56} & 54.56 & 50.86 & \cellcolor{ourhighlight}54.20 & 53.82 & 53.66 & 53.75 & \cellcolor{ourhighlight}\textbf{54.62} \\
    Appearance Style         & \textbf{22.88} & 21.49 & 21.60 & \cellcolor{ourhighlight}21.25 & \textbf{23.59} & 21.72 & 22.49 & \cellcolor{ourhighlight}21.71 \\
    Temporal Style           & \textbf{25.90} & 25.08 & 25.05 & \cellcolor{ourhighlight}25.35 & \textbf{25.73} & 25.19 & 25.23 & \cellcolor{ourhighlight}25.92 \\
    Overall Consistency      & \textbf{27.09} & 26.60 & 26.63 & \cellcolor{ourhighlight}26.57 & \textbf{27.34} & 26.85 & 26.99 & \cellcolor{ourhighlight}27.14 \\
    \midrule
    \rowcolor{gray!8}
    Quality Score            & 84.43 & 83.72 & 84.53 & \cellcolor{ourhighlight}\textbf{85.50} & 84.71 & 84.97 & 84.56 & \cellcolor{ourhighlight}\textbf{85.55} \\
    \rowcolor{gray!8}
    Semantic Score           & 80.73 & 79.18 & 78.78 & \cellcolor{ourhighlight}\textbf{80.27} & 81.75 & 80.35 & 80.84 & \cellcolor{ourhighlight}\textbf{82.12} \\
    \rowcolor{gray!8}
    Total Score              & 83.69 & 82.81 & 83.38 & \cellcolor{ourhighlight}\textbf{84.46} & 84.12 & 84.05 & 83.81 & \cellcolor{ourhighlight}\textbf{84.87} \\
    \bottomrule
  \end{tabular}
\end{table*}

\hypertarget{app:User}{}
\section{User Study Details}
\hypertarget{app:hyperparams}{}
\label{app:user_study_details}

To complement the automated metrics, we conduct a blind pairwise user study to compare \methodname{} with DMD, rCM, and the 50-step teacher. The study involves 25 evaluators. In each trial, a participant is shown two videos generated from the same text prompt, with the left--right display order randomized. The source methods are hidden from the participant. The participant is asked to judge which video is better with respect to visual quality and semantic alignment to the prompt.

We use the following instruction for all participants:

\begin{quote}
You will be shown two videos generated from the same text prompt. Please compare the two videos and select the one that is better overall, considering both visual quality and semantic alignment to the prompt. The presentation order is randomized, and the source methods are not disclosed. Please make your judgment only based on the displayed videos and the prompt.
\end{quote}

The evaluation is conducted in a blind manner. Each participant reviews video pairs from multiple method comparisons, including \methodname{} versus DMD, \methodname{} versus rCM, and \methodname{} versus the teacher model. We record the pairwise preference rate of \methodname{} over each baseline. The aggregated results are reported in Fig.~\ref{fig:user_study}.

This study is minimal risk. Participants are only asked to compare generated videos and no sensitive personal information is collected. Participation is voluntary, and participants may stop at any time.

\paragraph{Compensation.}
% [Participants were not financially compensated.]
% or
All participants received compensation according to local institutional practice.

\paragraph{Ethics review.}
No formal IRB or equivalent approval was obtained for this minimal-risk study.
% or
% [This study was reviewed through the relevant institutional ethics review process.]

\paragraph{Interface.}
The user study is conducted with a local side-by-side comparison interface that displays the prompt together with two generated videos. The interface supports both desktop and mobile layouts, with A/B video panels adapted to different screen sizes for convenient access across devices. An anonymized screenshot of the interface is shown in Fig.~\ref{fig:user_study_interface}.

\begin{figure}[h]
    \centering
    \includegraphics[width=\linewidth]{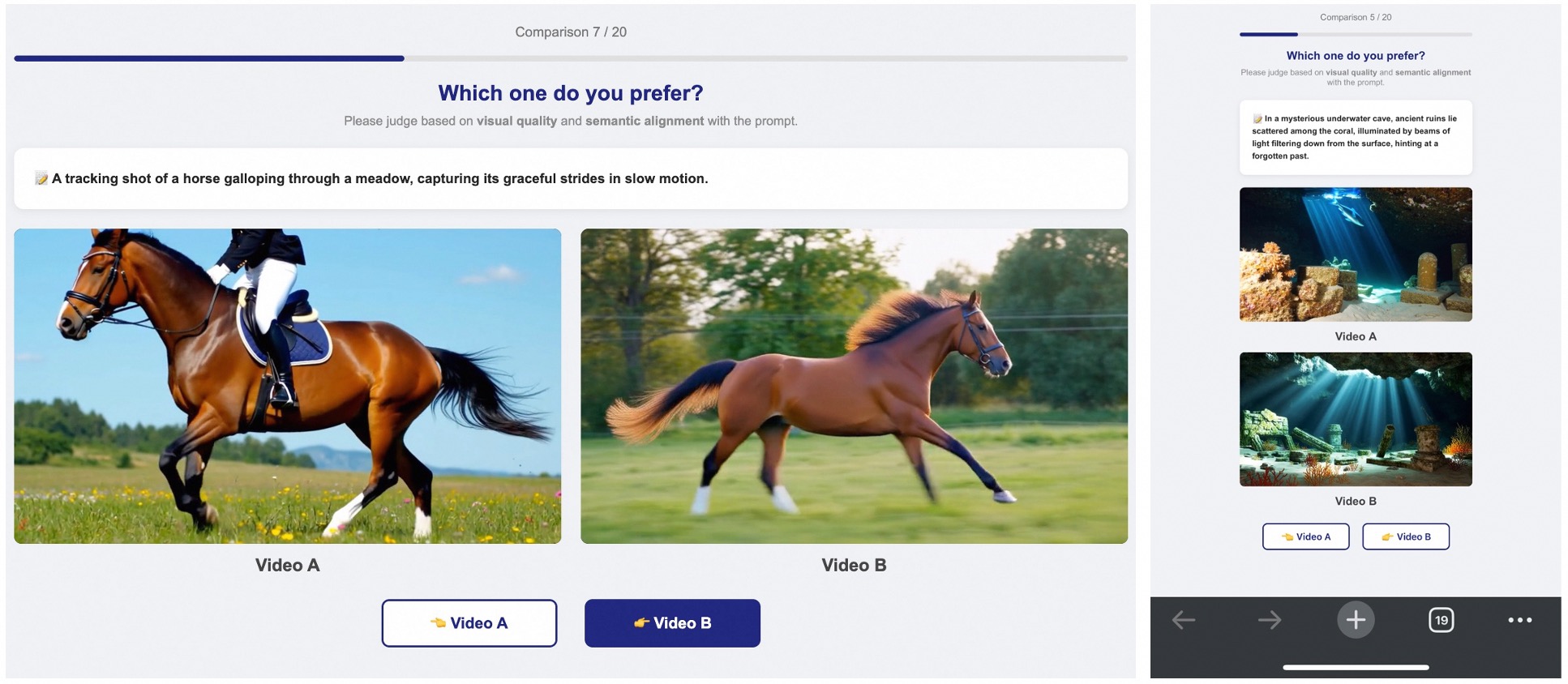}
    \caption{Anonymized interface used in the blind pairwise user study. The interface displays the prompt together with two generated videos in randomized left--right order and supports both desktop and mobile layouts.}
    \label{fig:user_study_interface}
\end{figure}

\section{Compute Resources}
\label{app:compute}
Each main training in the manuscript is conducted on 32 NVIDIA A100 GPUs with 80GB memory and takes approximately 1.5 days. We report the compute configuration for the experiments in the paper, while additional exploratory runs may require extra compute beyond the final reported settings.

\hypertarget{app:theory}{}
\section{Theoretical Analysis}
\label{sec:theory}

This appendix supplies the formal statements and proofs of the three
propositions introduced in \S\ref{subsec:copula_theory} of the main
paper. These results identify the structural mechanism behind DMD's
fragility in the few-step video regime and motivate the design choices
in \methodname{}.

Let $X=(X_1,\dots,X_F)$ denote a video with $F$ frames, $P$ the
teacher-induced distribution and $Q$ the student-induced distribution,
with $P_t,Q_t$ the frame-wise marginals. For a symmetric distance
$d(\cdot,\cdot)$ on score-based features, the pairwise relational
statistic
\[
    \mathcal{D}(Q)= \mathbb{E}_{Z,Z'\sim Q}\bigl[d(Z,Z')\bigr]
\]
captures the relational geometry of a distribution. This statistic is
evaluated at two granularities in \methodname{}: across batch elements
and across temporal frames (\S\ref{subsec:codmd}).

\hypertarget{app:theory:c1}{}
\subsection{Marginal Matching Does Not Identify Relational Structure}

We first formalize the limitation of coordinate-wise or marginal
supervision.

\begin{proposition}[Relational non-identifiability under marginal matching]
\label{prop:marginal_nonidentifiability}
There exist two video distributions $P$ and $Q$ such that
\[
    P_t = Q_t, \qquad \forall t\in\{1,\dots,T\},
\]
but their temporal relational structures differ. Consequently, any
training objective that depends only on the frame-wise marginals
$\{Q_t\}_{t=1}^T$ cannot distinguish $Q$ from $P$.
\end{proposition}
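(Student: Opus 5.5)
The plan is to give an explicit construction, generalizing the two-frame example from \S\ref{subsec:copula_theory} to an arbitrary number of frames $T\ge 2$. Let $U\sim\mathcal N(0,1)$ be a scalar latent (or, if one prefers per-frame tensors in $\mathbb R^{C\times H\times W}$, a standard Gaussian vector $U$ of that dimension). I will define $P$ as the law of $X=(U,U,\dots,U)$, so every frame is identical, and $Q$ as the law of $X=(U,-U,U,-U,\dots)$, that is, $X_t=(-1)^{t+1}U$.

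\textbf{Step 1 (equal marginals).} Since $U\overset{d}{=}-U$ by the symmetry of the centered Gaussian, each frame marginal is $P_t=Q_t=\mathcal N(0,1)$ (respectively $\mathcal N(\mathbf 0,\mathbf I)$) for all $t$.

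\textbf{Step 2 (different relational structure).} I will exhibit a pairwise statistic that separates the two laws. For frames $1$ and $2$,
\[
\mathbb E_P[X_1X_2]=\mathbb E[U^2]=1,
\qquad
\mathbb E_Q[X_1X_2]=-\mathbb E[U^2]=-1 .
\]
In the language of the paper's similarity matrices, the frame-level cosine similarity between frames $1$ and $2$ is $+1$ almost surely under $P$ and $-1$ almost surely under $Q$. So the $F\times F$ relation matrices differ, and hence $P\neq Q$ as joint laws. If a copula-level statement is wanted, I would also note that $P$ has the comonotone copula $M$ on the pair $(X_1,X_2)$, while $Q$ has the countermonotone copula $W$ on that pair, consistent with Sklar's decomposition \eqref{eq:sklar}. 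These are singular copulas, so I would phrase this via the copula as a distribution function rather than a density. A density-based variant is also available if preferred: correlated Gaussians with correlation $\pm\rho$, $|\rho|<1$.

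\textbf{Step 3 (consequence for objectives).} I will formalize ``depends only on frame-wise marginals'' as $\mathcal L(Q)=\Phi(Q_1,\dots,Q_T)$ for some functional $\Phi$. Then $\mathcal L(Q)=\Phi(P_1,\dots,P_T)=\mathcal L(P)$ immediately by Step 1, so no such objective can prefer $P$ over $Q$, and no minimizer of $\mathcal L$ is pinned to the teacher's relational structure.

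The only real subtlety is this last formalization step: being precise that ``depends only on marginals'' means factoring through the tuple of marginals. The rest is routine. I would also remark, without claiming it formally, that DMD's coordinate-wise surrogate is motivated by this class of objectives.
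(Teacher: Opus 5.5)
Your proposal is correct and follows essentially the same route as the paper's proof: you compare the antipodal Gaussian pair $(U,U)$ with $(U,-U)$, use Gaussian symmetry to get equal $\mathcal N(0,1)$ marginals, and separate the two laws by the cross-moment $\mathbb E[X_1X_2]=\pm 1$. Two of your additions go beyond the paper: the alternating-sign extension $X_t=(-1)^{t+1}U$ covers every $T\ge 2$ explicitly, whereas the paper only asserts that a two-frame example suffices, and you correctly observe that the resulting comonotone and countermonotone copulas are singular, so Sklar's theorem must be read through distribution functions rather than the density form the paper uses.
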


\begin{proof}
It suffices to construct a two-frame example. Let $U\sim \mathcal{N}(0,1)$
and define
\[
    P: \quad (X_1,X_2)=(U,U),
\]
while
\[
    Q: \quad (X_1,X_2)=(U,-U).
\]
For both distributions, each individual frame marginal is standard
Gaussian:
\[
    P_1=P_2=Q_1=Q_2=\mathcal{N}(0,1).
\]
Hence any objective depending only on the frame-wise marginals assigns
the same value to $P$ and $Q$.

However, their temporal relations are different. For example,
\[
    \mathbb{E}_{P}[X_1X_2]=1,
    \qquad
    \mathbb{E}_{Q}[X_1X_2]=-1.
\]
Equivalently, the expected squared frame difference satisfies
\[
    \mathbb{E}_{P}\bigl[(X_1-X_2)^2\bigr]=0,
    \qquad
    \mathbb{E}_{Q}\bigl[(X_1-X_2)^2\bigr]=4.
\]
Thus the two distributions have identical frame-wise marginals but
opposite temporal dependence. This proves that marginal matching alone
does not identify relational structure.
\end{proof}

Motion, layout consistency and semantic coherence depend on
couplings among frames and among generated samples; these couplings
can change even when all individual marginals remain matched.

\hypertarget{app:theory:c2}{}
\subsection{Reverse-KL Selects Dominant Modes Under Restricted Families}

We next formalize the mode-seeking behavior of reverse-KL in a restricted
family. Unlike an informal statement about ``mode collapse'', the following
result gives an exact calculation.

\begin{proposition}[Reverse-KL selection in a disjoint mixture]
\label{prop:reversekl_mixture}
Let the teacher distribution be a mixture
\[
    P = \sum_{k=1}^{K} \pi_k P_k,
    \qquad
    \pi_k>0,\quad \sum_{k=1}^{K}\pi_k=1,
\]
where the component distributions $P_k$ have disjoint supports
$A_k$. Suppose the student family is restricted to single-component
approximants
\[
    \mathcal{Q}_{\mathrm{single}}
    =
    \{P_1,\dots,P_K\}.
\]
Then
\[
    D_{\mathrm{KL}}(P_j\,\|\,P)
    =
    -\log \pi_j.
\]
Therefore,
\[
    \arg\min_{Q\in\mathcal{Q}_{\mathrm{single}}}
    D_{\mathrm{KL}}(Q\,\|\,P)
    =
    P_{j^\star},
    \qquad
    j^\star\in\arg\max_j \pi_j.
\]
That is, reverse-KL selects a dominant mixture component rather than
covering all teacher-supported components.
\end{proposition}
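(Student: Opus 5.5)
The plan is to compute the Radon--Nikodym derivative of $P_j$ with respect to $P$ directly and then integrate its logarithm. First we check absolute continuity, since otherwise the KL divergence would be $+\infty$. Because $P=\sum_k\pi_kP_k$ with all $\pi_k>0$, every $P$-null set $E$ satisfies $\pi_jP_j(E)\le P(E)=0$. Hence $P_j\ll P$, and $D_{\mathrm{KL}}(P_j\,\|\,P)=\int\log\frac{dP_j}{dP}\,dP_j$ is well defined.

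The key step is to identify this density. We claim that
\[
\frac{dP_j}{dP}=\frac{1}{\pi_j}\mathbf{1}_{A_j}\qquad P\text{-a.e.}
\]
To verify it, take any measurable $E$. Since $P_k(A_k)=1$ and the supports are disjoint, $P_k(A_j)=0$ for $k\neq j$. It follows that
\[
P(E\cap A_j)=\sum_k\pi_kP_k(E\cap A_j)=\pi_jP_j(E\cap A_j)=\pi_jP_j(E),
\]
so $\int_E\pi_j^{-1}\mathbf{1}_{A_j}\,dP=P_j(E)$, as claimed. If densities with respect to a common reference measure are preferred, the same argument says that the mixture density equals $\pi_jp_j$ on $A_j$.

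Integrating $\log\frac{dP_j}{dP}$ against $P_j$ then gives the result. Since $P_j$ is concentrated on $A_j$, where the integrand equals the constant $-\log\pi_j$, we get $D_{\mathrm{KL}}(P_j\,\|\,P)=-\log\pi_j$. The argmin statement follows because $-\log$ is strictly decreasing. Minimizing $-\log\pi_j$ over the finite family $\mathcal{Q}_{\mathrm{single}}$ is therefore the same as maximizing $\pi_j$, so the minimizers are exactly the $P_{j^\star}$ with $j^\star\in\arg\max_j\pi_j$.

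The same computation also yields the more general identity quoted in the main text. For any $Q_j\ll P_j$ supported on $A_j$, the chain rule for densities gives
\[
\log\frac{dQ_j}{dP}=\log\frac{dQ_j}{dP_j}-\log\pi_j\quad\text{on }A_j,
\]
and hence $D_{\mathrm{KL}}(Q_j\,\|\,P)=D_{\mathrm{KL}}(Q_j\,\|\,P_j)-\log\pi_j$.

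The main obstacle is the measure-theoretic bookkeeping in the density identification. We must interpret the phrase ``disjoint supports $A_k$'' as meaning $P_k(A_k)=1$ with the $A_k$ pairwise disjoint, rather than requiring the closed supports to be disjoint. This reading is what ensures $P_k(A_j)=0$ for $k\neq j$, and it is all the argument needs. I would state this convention explicitly at the start of the proof.
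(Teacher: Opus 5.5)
Your proposal is correct and is essentially the paper's proof. On $A_j$ the mixture collapses to $\pi_j P_j$, so $\log(dP_j/dP)$ equals the constant $-\log\pi_j$ on $A_j$, which carries all of $P_j$'s mass; integrating gives $D_{\mathrm{KL}}(P_j\,\|\,P)=-\log\pi_j$, and monotonicity of $-\log$ then gives the argmin. The only difference is presentation: you use the Radon--Nikodym derivative $dP_j/dP=\pi_j^{-1}\mathbf{1}_{A_j}$ and explicitly check $P_j\ll P$ and the meaning of ``disjoint supports,'' whereas the paper works directly with densities $p(x)=\pi_j p_j(x)$ on $A_j$ and leaves those points implicit.
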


\begin{proof}
Since the supports are disjoint, for $x\in A_j$ we have
\[
    p(x)=\pi_j p_j(x).
\]
Therefore,
\[
\begin{aligned}
    D_{\mathrm{KL}}(P_j\,\|\,P)
    &=
    \int_{A_j}
    p_j(x)
    \log
    \frac{p_j(x)}{p(x)}
    \,dx  \\
    &=
    \int_{A_j}
    p_j(x)
    \log
    \frac{p_j(x)}{\pi_j p_j(x)}
    \,dx  \\
    &=
    \int_{A_j}
    p_j(x)
    \log
    \frac{1}{\pi_j}
    \,dx  \\
    &=
    -\log \pi_j.
\end{aligned}
\]
Thus minimizing $D_{\mathrm{KL}}(P_j\|P)$ is equivalent to maximizing
$\pi_j$, proving the claim.
\end{proof}

This result captures the precise sense in which reverse-KL is
mode-seeking under a restricted student family. A more general
version holds: if $Q_j$ is supported on $A_j$, then
$D_{\mathrm{KL}}(Q_j\|P)= D_{\mathrm{KL}}(Q_j\|P_j)-\log\pi_j$.
Reverse-KL therefore trades off per-component fit against mixture
weight, but never penalizes the absence of uncovered modes.

A slightly more general version is also useful. If a candidate $Q_j$ is
supported on $A_j$, then
\[
    D_{\mathrm{KL}}(Q_j\,\|\,P)
    =
    D_{\mathrm{KL}}(Q_j\,\|\,P_j)
    -
    \log \pi_j.
\]
Thus reverse-KL trades off how well $Q_j$ fits component $P_j$ against the
mixture weight $\pi_j$, but still does not directly penalize the absence
of other components.

\hypertarget{app:theory:c3}{}
\subsection{Relational Matching Penalizes Collapsed Solutions}

We now show that pairwise relational matching provides a mechanism that
standard reverse-KL lacks: it can penalize a student that collapses to one
component of a multi-relation teacher distribution.

Let $d(z,z')$ be a nonnegative symmetric distance in feature space, and
define
\[
    \mathcal{D}(Q)
    =
    \mathbb{E}_{Z,Z'\sim Q}
    \bigl[
        d(Z,Z')
    \bigr].
\]
For the teacher mixture
$P=\sum_{k=1}^{K}\pi_k P_k$, we have
\[
\begin{aligned}
    \mathcal{D}(P)
    &=
    \sum_{a=1}^{K}\sum_{b=1}^{K}
    \pi_a\pi_b
    \mathbb{E}_{Z\sim P_a,\,Z'\sim P_b}
    \bigl[
        d(Z,Z')
    \bigr].
\end{aligned}
\]
The terms with $a\neq b$ encode cross-mode relational diversity.

\begin{proposition}[Relational penalty for single-mode collapse]
\label{prop:relational_penalty}
Assume the teacher distribution is a mixture
$P=\sum_{k=1}^{K}\pi_k P_k$ with disjoint relational modes. Suppose that
for some constants $0\leq \epsilon < \Delta$,
\[
    \mathbb{E}_{Z,Z'\sim P_k}
    \bigl[
        d(Z,Z')
    \bigr]
    \leq \epsilon,
    \qquad \forall k,
\]
while for all $a\neq b$,
\[
    \mathbb{E}_{Z\sim P_a,\,Z'\sim P_b}
    \bigl[
        d(Z,Z')
    \bigr]
    \geq \Delta.
\]
Then any single-mode student $Q=P_j$ satisfies
\[
    \bigl|
        \mathcal{D}(Q)-\mathcal{D}(P)
    \bigr|
    \geq
    \Delta
    \Bigl(
        1-\sum_{k=1}^{K}\pi_k^2
    \Bigr)
    -
    \epsilon.
\]
In particular, when the teacher has multiple well-separated relational
modes, a single-mode student incurs a nonzero relational discrepancy.
\end{proposition}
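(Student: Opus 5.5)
The argument bounds the two statistics separately and then subtracts; the absolute value can be dropped because we will show $\mathcal{D}(P)-\mathcal{D}(Q)$ itself is at least the right-hand side.

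\emph{Upper bound on the collapsed student.} Since $Q=P_j$, the definition of $\mathcal{D}$ together with the intra-mode hypothesis gives directly
\[
\mathcal{D}(Q)=\mathbb{E}_{Z,Z'\sim P_j}\bigl[d(Z,Z')\bigr]\le\epsilon .
\]

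\emph{Lower bound on the teacher.} Start from the mixture expansion displayed just before the statement,
\[
\mathcal{D}(P)=\sum_{a,b}\pi_a\pi_b\,\mathbb{E}_{Z\sim P_a,\,Z'\sim P_b}\bigl[d(Z,Z')\bigr].
\]
Split the double sum into diagonal terms $a=b$ and off-diagonal terms $a\neq b$. For the diagonal terms, use only $d\ge 0$, so each is at least $0$. For the off-diagonal terms, use the cross-mode hypothesis, so each is at least $\pi_a\pi_b\Delta$. Summing the off-diagonal weights,
\[
\sum_{a\neq b}\pi_a\pi_b=\Bigl(\sum_k\pi_k\Bigr)^2-\sum_k\pi_k^2=1-\sum_k\pi_k^2 .
\]
Hence
\[
\mathcal{D}(P)\ge\Delta\Bigl(1-\sum_k\pi_k^2\Bigr).
\]

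\emph{Conclusion.} Subtracting the two bounds,
\[
\mathcal{D}(P)-\mathcal{D}(Q)\ge\Delta\Bigl(1-\sum_k\pi_k^2\Bigr)-\epsilon .
\]
Since $|x|\ge x$, the stated inequality on $\bigl|\mathcal{D}(Q)-\mathcal{D}(P)\bigr|$ follows. For the closing remark, note that when $K\ge 2$ with all $\pi_k>0$, the factor $1-\sum_k\pi_k^2$ is strictly positive. The bound is therefore nonzero as soon as $\epsilon<\Delta\bigl(1-\sum_k\pi_k^2\bigr)$, which is the meaning of ``well-separated''.

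There is no real obstacle here. The only point needing care is that the diagonal terms in $\mathcal{D}(P)$ should be bounded below by $0$ rather than by $\epsilon$, since $\epsilon$ is only an upper bound on them. Discarding those terms is what makes the argument go through without any further assumption. The disjoint-support assumption is not needed for this computation; it only motivates the setting.
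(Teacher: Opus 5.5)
Your proof is correct and follows the paper's argument essentially step for step: you lower-bound $\mathcal{D}(P)$ by its off-diagonal mixture terms using $d\ge 0$ and the cross-mode hypothesis, upper-bound $\mathcal{D}(Q)=\mathcal{D}(P_j)$ by $\epsilon$, and subtract. You also make explicit two points the paper leaves implicit, namely that the diagonal terms are dropped because $d$ is nonnegative, and that the signed bound implies the absolute-value bound since $|x|\ge x$.
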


\begin{proof}
For the teacher distribution,
\[
\begin{aligned}
    \mathcal{D}(P)
    &=
    \sum_{k=1}^{K}
    \pi_k^2
    \mathbb{E}_{Z,Z'\sim P_k}
    [d(Z,Z')]
    +
    \sum_{a\neq b}
    \pi_a\pi_b
    \mathbb{E}_{Z\sim P_a,Z'\sim P_b}
    [d(Z,Z')]  \\
    &\geq
    \sum_{a\neq b}
    \pi_a\pi_b \Delta  \\
    &=
    \Delta
    \Bigl(
        1-\sum_{k=1}^{K}\pi_k^2
    \Bigr).
\end{aligned}
\]
For a single-mode student $Q=P_j$, by assumption,
\[
    \mathcal{D}(Q)
    =
    \mathbb{E}_{Z,Z'\sim P_j}
    [d(Z,Z')]
    \leq \epsilon.
\]
Therefore,
\[
    \mathcal{D}(P)-\mathcal{D}(Q)
    \geq
    \Delta
    \Bigl(
        1-\sum_{k=1}^{K}\pi_k^2
    \Bigr)
    -
    \epsilon,
\]
which proves the claim.
\end{proof}

This result formalizes why pairwise relational matching complements
DMD. Reverse-KL may select a dominant component under the restricted
few-step student family; a relational objective measures the loss of
cross-mode geometry through pairwise distances or similarities.

\subsection{Implication for \methodname}

The three propositions jointly justify the design decisions in
\S\ref{subsec:codmd}. Proposition~\ref{prop:marginal_nonidentifiability}
shows that coordinate-wise supervision cannot identify temporal or
batch-level relational geometry. Proposition~\ref{prop:reversekl_mixture}
shows that reverse-KL selects a dominant mode under the restricted
few-step student family. Proposition~\ref{prop:relational_penalty}
shows that pairwise relational matching assigns a nonzero cost to
single-mode concentration whenever the teacher contains multiple
relational modes. \methodname{} therefore instantiates this relational
penalty via the score-as-target surrogate of
Eq.~\eqref{eq:rel_loss_general}, applied at two complementary
granularities (batch and frame) that together regularize the degrees
of freedom DMD leaves unconstrained. The gradient structure of this
construction is formalized in Appendix~\ref{app:rel_grad_derivation}.

\hypertarget{app:theory:c4}{}
\subsection{Gradient Derivation of the Relationship-Distillation Loss}
\label{app:rel_grad_derivation}

This appendix proves the claim made in \S\ref{subsec:codmd}: the
gradient of the relationship-distillation loss $\mathcal L_\text{rel}$
with respect to the student similarity matrix $\mathbf S_\text{stu}$
is, to first order in $\tau^{-1}$, proportional to the relational
score difference
$\boldsymbol{\Delta}_{\!\mathbf S}=\mathbf S_\text{fake}-\mathbf S_\text{real}$.
This makes the analogy with the DMD gradient~\eqref{eq:dmd_grad}
mathematically precise: \methodname{} and DMD are two instances of the
\emph{same} score-as-target construction, applied at different
granularities (coordinates vs.\ pairwise interactions).

%-----------------------------------------------------------------------
Let $\mathbf S_\text{stu},\mathbf S_\text{real},\mathbf S_\text{fake}\in\mathbb R^{N\times N}$
be the cosine similarity matrices of \S\ref{subsec:codmd}, and
$\boldsymbol{\Delta}_{\!\mathbf S}=\mathbf S_\text{fake}-\mathbf S_\text{real}$.
For brevity, write
\[
\mathbf A \;=\; \tau^{-1}\mathbf S_\text{stu},\qquad
\mathbf B \;=\; \tau^{-1}\bigl(\mathbf S_\text{stu}-\boldsymbol{\Delta}_{\!\mathbf S}\bigr)
\;=\; \mathbf A - \tau^{-1}\boldsymbol{\Delta}_{\!\mathbf S}.
\]
Both matrices are passed through a row-wise softmax, and the gradient
through the target $\mathbf P_\text{tgt}$ is detached:
\[
\mathbf P_\text{stu}^{i,:}\;=\;\sigma(\mathbf A^{i,:}),\qquad
\mathbf P_\text{tgt}^{i,:}\;=\;\mathrm{stopgrad}\!\bigl(\sigma(\mathbf B^{i,:})\bigr),
\]
where $\sigma(\mathbf v)_j = e^{v_j}/\sum_k e^{v_k}$ denotes the softmax
along a row. The relationship-distillation loss
(Eq.~\eqref{eq:rel_loss_general}) is the row-averaged KL,
\begin{equation}\label{eq:rel_loss_appendix}
\mathcal L_\text{rel}
\;=\;
\frac{1}{N}\sum_{i=1}^{N}
\mathrm{KL}\!\bigl(\mathbf P_\text{tgt}^{i,:}\,\big\|\,\mathbf P_\text{stu}^{i,:}\bigr)
\;=\;
\frac{1}{N}\sum_{i=1}^{N}\sum_{j=1}^{N}
P^{i}_{\text{tgt},j}\,\bigl[\log P^{i}_{\text{tgt},j}-\log P^{i}_{\text{stu},j}\bigr].
\end{equation}

We will compute $\partial\mathcal L_\text{rel}/\partial\mathbf S_\text{stu}$
in two steps: (1) the exact closed-form gradient, then (2) its first-order
expansion in the small parameter $\tau^{-1}\boldsymbol{\Delta}_{\!\mathbf S}$.

%-----------------------------------------------------------------------
\subsection*{Step 1: Exact gradient via the softmax Jacobian}

Because $\mathbf P_\text{tgt}$ is detached, only $\log P^{i}_{\text{stu},j}$
in Eq.~\eqref{eq:rel_loss_appendix} carries gradient. Fix a row $i$
and recall the standard log-softmax identity
\[
\log P^{i}_{\text{stu},j}
\;=\;
A_{ij} \;-\; \log\!\sum_{k} e^{A_{ik}}.
\]
Differentiating with respect to $A_{im}$ yields
\begin{equation}\label{eq:logsoftmax_jac}
\frac{\partial \log P^{i}_{\text{stu},j}}{\partial A_{im}}
\;=\;
\mathbf 1[j\!=\!m] \;-\; P^{i}_{\text{stu},m}.
\end{equation}
Substituting Eq.~\eqref{eq:logsoftmax_jac} into the row-$i$ contribution
of Eq.~\eqref{eq:rel_loss_appendix} (and using
$\sum_j P^{i}_{\text{tgt},j}=1$),
\begin{align}
\frac{\partial}{\partial A_{im}}\!\Bigl[\!-\!\sum_{j} P^{i}_{\text{tgt},j}\log P^{i}_{\text{stu},j}\Bigr]
&=
-\sum_{j} P^{i}_{\text{tgt},j}\bigl(\mathbf 1[j\!=\!m]-P^{i}_{\text{stu},m}\bigr) \nonumber\\
&=
P^{i}_{\text{stu},m}-P^{i}_{\text{tgt},m}.
\label{eq:exact_grad_in_A}
\end{align}
Chaining through $A_{im}=\tau^{-1}(S_\text{stu})_{im}$,
\begin{equation}\label{eq:exact_grad_in_S}
\frac{\partial\mathcal L_\text{rel}}{\partial(S_\text{stu})_{im}}
\;=\;
\frac{1}{N\tau}\,\bigl(P^{i}_{\text{stu},m}-P^{i}_{\text{tgt},m}\bigr).
\end{equation}
This is exact for any $\tau>0$. The student is updated only where its
softmax-normalised relational profile differs from the (detached)
target profile, mirroring DMD's principle of ``update only where
critic and teacher disagree.''

%-----------------------------------------------------------------------
\subsection*{Step 2: First-order expansion in $\tau^{-1}$}

To make the link with the score difference
$\boldsymbol{\Delta}_{\!\mathbf S}$ explicit, we expand
$\mathbf P_\text{tgt}-\mathbf P_\text{stu}$ in powers of
$\boldsymbol{\delta}^{i}=\tau^{-1}(\boldsymbol{\Delta}_{\!\mathbf S})^{i,:}$.
The expansion is informative when the per-row entries of
$\boldsymbol{\delta}^{i}$ are moderate; we verify this empirically
in the ``When does the linearisation hold'' paragraph below.

\paragraph{Softmax differential.}
The Jacobian of the softmax at a point $\mathbf v\!\in\!\mathbb R^{N}$ is the
well-known matrix
\begin{equation}\label{eq:softmax_jacobian}
J(\mathbf v) \;=\; \mathrm{diag}(\sigma(\mathbf v)) - \sigma(\mathbf v)\,\sigma(\mathbf v)^{\top}.
\end{equation}
Setting $\mathbf v=\mathbf A^{i,:}$ and noting that
$\mathbf B^{i,:}=\mathbf A^{i,:}-\boldsymbol{\delta}^{i}$, a first-order
Taylor expansion gives
\begin{equation}\label{eq:taylor}
\sigma(\mathbf B^{i,:})
\;=\;
\sigma(\mathbf A^{i,:}) \;-\; J(\mathbf A^{i,:})\,\boldsymbol{\delta}^{i}
\;+\; O\!\bigl(\|\boldsymbol{\delta}^{i}\|^{2}\bigr).
\end{equation}
Hence
\begin{equation}\label{eq:diff_pp}
\mathbf P_\text{tgt}^{i,:}-\mathbf P_\text{stu}^{i,:}
\;=\;
-J(\mathbf A^{i,:})\,\boldsymbol{\delta}^{i}
\;+\; O\!\bigl(\|\boldsymbol{\delta}^{i}\|^{2}\bigr).
\end{equation}

\paragraph{Substituting into the exact gradient.}
Plugging Eq.~\eqref{eq:diff_pp} into Eq.~\eqref{eq:exact_grad_in_S},
\begin{align}
\frac{\partial\mathcal L_\text{rel}}{\partial(S_\text{stu})_{im}}
&=
\frac{1}{N\tau}\,\bigl(P^{i}_{\text{stu},m}-P^{i}_{\text{tgt},m}\bigr)
\nonumber\\
&=
\frac{1}{N\tau}\,\bigl[J(\mathbf A^{i,:})\,\boldsymbol{\delta}^{i}\bigr]_{m}
\;+\; O(\tau^{-3})
\nonumber\\
&=
\frac{1}{N\tau^{2}}\,\bigl[J(\mathbf A^{i,:})\,(\boldsymbol{\Delta}_{\!\mathbf S})^{i,:}\bigr]_{m}
\;+\; O(\tau^{-3}),
\label{eq:linearised_grad_S}
\end{align}
where we used $\boldsymbol{\delta}^{i}=\tau^{-1}(\boldsymbol{\Delta}_{\!\mathbf S})^{i,:}$.
In matrix form,
\begin{equation}\label{eq:linearised_grad_matrix}
\nabla_{\!\mathbf S_\text{stu}}\mathcal L_\text{rel}
\;=\;
\frac{1}{N\tau^{2}}\,
\mathbf M(\mathbf S_\text{stu};\tau)\;\boldsymbol{\Delta}_{\!\mathbf S}
\;+\; O(\tau^{-3}),
\end{equation}
where the row-wise positive semidefinite preconditioner
\(
\mathbf M(\mathbf S_\text{stu};\tau)
\)
collects the row-Jacobians $J(\tau^{-1}\mathbf S_\text{stu}^{i,:})$.

\paragraph{Interpretation.}
Eq.~\eqref{eq:linearised_grad_matrix} establishes the central claim:
to first order in $\tau^{-1}$, the gradient of $\mathcal L_\text{rel}$
on $\mathbf S_\text{stu}$ is a PSD linear map applied
to the relational score difference $\boldsymbol{\Delta}_{\!\mathbf S}$.
Two consequences follow.

\textbf{(i) Direction.}
The matrix $\mathbf M$ is row-wise PSD (it is the
softmax-Jacobian at $\mathbf A^{i,:}$), so for each row $i$ the
gradient points along the same half-space as
$\boldsymbol{\Delta}_{\!\mathbf S}^{i,:}$ up to a bounded reweighting.
A gradient-descent step on $\theta$ therefore moves the student
similarities along
$-\boldsymbol{\Delta}_{\!\mathbf S}=\mathbf S_\text{real}-\mathbf S_\text{fake}$.

\textbf{(ii) Vanishing supervision.}
Whenever critic and teacher agree relationally,
$\boldsymbol{\Delta}_{\!\mathbf S}\to\mathbf 0$ and the
gradient vanishes; the student is updated only where they disagree.
This is the relational analogue of the DMD property that the
gradient~\eqref{eq:dmd_grad} vanishes whenever $s_\text{fake}=s_\text{real}$.

%-----------------------------------------------------------------------
\subsection*{Step 3: Mirroring the DMD gradient via the chain rule}

To complete the parallel with Eq.~\eqref{eq:dmd_grad}, chain
Eq.~\eqref{eq:linearised_grad_matrix} through the construction of
$\mathbf S_\text{stu}$ and the generator. Recall that
$\mathbf S_\text{stu}=\mathbf S(\bar{\mathbf x})$ with
$\bar{\mathbf x}^{(b)}=\mathrm{pool}\bigl(G_\theta(\mathbf z^{(b)})\bigr)$,
so
\begin{equation}\label{eq:full_chain}
\frac{\partial\mathcal L_\text{rel}}{\partial\theta}
\;=\;
\sum_{b}
\underbrace{\frac{\partial\mathcal L_\text{rel}}{\partial\mathbf S_\text{stu}}}_{\text{Eq.~\eqref{eq:linearised_grad_matrix}}}
\;:\;
\underbrace{\frac{\partial\mathbf S_\text{stu}}{\partial\bar{\mathbf x}^{(b)}}}_{\text{cos-sim Jac.}}
\;\cdot\;
\underbrace{\frac{\partial\bar{\mathbf x}^{(b)}}{\partial G_\theta(\mathbf z^{(b)})}}_{\text{linear pool}}
\;\cdot\;
\frac{d G_\theta(\mathbf z^{(b)})}{d\theta},
\end{equation}
where ``$:$'' denotes the matrix Frobenius pairing. Substituting
Eq.~\eqref{eq:linearised_grad_matrix},
\begin{equation}\label{eq:final_chain}
\frac{\partial\mathcal L_\text{rel}}{\partial\theta}
\;\propto\;
\sum_{b}\;
\boldsymbol{\Delta}_{\!\mathbf S}\;\;\text{(propagated through bilinear pool/cosine)}\;\;
\cdot\;\frac{d G_\theta(\mathbf z^{(b)})}{d\theta}
\;+\; O(\tau^{-3}).
\end{equation}
Compare with the DMD gradient~\eqref{eq:dmd_grad},
\[
\nabla_\theta D_{\mathrm{KL}}
\;\simeq\;
\mathbb E_{\mathbf z,t}\Bigl[
  \underbrace{w_t\alpha_t(s_\text{fake}-s_\text{real})}_{\boldsymbol{\Delta}(\mathbf x)}
  \,\tfrac{d G_\theta(\mathbf z)}{d\theta}
\Bigr].
\]
Both gradients have the same structure
\[
\bigl(\text{score-difference signal}\bigr)\;\times\;\bigl(\text{generator Jacobian}\bigr),
\]
with $\boldsymbol{\Delta}(\mathbf x)$ and $\boldsymbol{\Delta}_{\!\mathbf S}$
playing identical roles at coordinate level and at relational level
respectively. This formalises the statement in \S\ref{subsec:codmd}
that DMD and \methodname{} are not analogies but instances of one
score-as-target construction at two different granularities.

%-----------------------------------------------------------------------
\paragraph{When does the linearisation hold?}
Eq.~\eqref{eq:linearised_grad_matrix} is a Taylor expansion in
$\tau^{-1}\boldsymbol{\Delta}_{\!\mathbf S}$. The remainder is
$O(\tau^{-3})$ uniformly on bounded similarity matrices because the
softmax has bounded second derivatives. In our experiments we use
$\tau=0.1$ and observe per-row $\|\boldsymbol{\Delta}_{\!\mathbf S}\|_\infty$
typically below $0.4$, so $\tau^{-1}\boldsymbol{\Delta}_{\!\mathbf S}$
is bounded and the expansion is informative as a guide to the optimiser
even though the optimisation itself uses the exact
gradient~\eqref{eq:exact_grad_in_S}.

\paragraph{Why the loss is not just an MSE on $\mathbf S_\text{stu}$.}
A simpler choice would be to write
$\tfrac{1}{2}\|\mathbf S_\text{stu}-\mathrm{stopgrad}(\mathbf S_\text{stu}-\boldsymbol{\Delta}_{\!\mathbf S})\|_F^{2}$,
mirroring Eq.~\eqref{eq:dmd_surrogate}. Its gradient with respect to
$\mathbf S_\text{stu}$ equals $\boldsymbol{\Delta}_{\!\mathbf S}$
directly, while the softmax-KL gradient of
Eq.~\eqref{eq:linearised_grad_matrix} is proportional to
$\boldsymbol{\Delta}_{\!\mathbf S}$ up to the PSD preconditioner
$\mathbf M$. Both are therefore driven by the same underlying signal,
but the softmax-KL form has two practical advantages on similarity
matrices:
(a) row-wise softmax normalisation cancels global scale and bias in
$\mathbf S$, so the loss is invariant to overall feature norms; and
(b) the temperature $\tau$ controls how peaked the relational target
is, which empirically stabilises training when $\boldsymbol{\Delta}_{\!\mathbf S}$
is noisy at early steps. Eq.~\eqref{eq:exact_grad_in_S} shows that
this stability comes at no cost in fidelity to the
score-as-target principle.

\paragraph{Granularity-specific instantiations.}
The same derivation applies row-wise to both
$\mathcal L_\text{rel}^\text{batch}$ ($N=B_\text{g}$) and
$\mathcal L_\text{rel}^\text{frame}$ ($N=F$), so the conclusion of
Eq.~\eqref{eq:linearised_grad_matrix} carries through unchanged for
the two granularities used in our final loss
Eq.~\eqref{eq:total_obj}. The only difference is which axis is
contracted by the pooling operator
$\partial\bar{\mathbf x}/\partial G_\theta(\mathbf z)$ in
Eq.~\eqref{eq:full_chain}.

% \section{Limitations and future work.}

% The relational matrices use globally pooled features, discarding spatial detail, thus the learned or attention-based pooling could strengthen the signal. Extending the relational recipe to single-step generators, image generator and long-form video are natural next steps. Deployments should pair efficiency gains with provenance and safety mechanisms.

\hypertarget{app:limit}{}
\section{Limitations and Future Work}
\label{sec:limitations}

\paragraph{Representation granularity.}
The relational matrices in \methodname{} are constructed from globally pooled representations that average over spatial and channel dimensions. This pooling discards fine-grained spatial detail, such as local texture patterns and per-region layout structure. Replacing the global average with learned or attention-based pooling could retain richer spatial information and strengthen the relational signal, potentially improving performance on spatially demanding metrics such as object consistency and scene composition.

\paragraph{Loss weighting.}
The two relational weights $\lambda_\text{batch}$ and $\lambda_\text{frame}$ are currently set as fixed hyperparameters throughout training. An adaptive schedule that adjusts the batch-to-frame balance according to training dynamics, for instance by monitoring the ratio of relational and coordinate-wise gradient norms, may further stabilize training and reduce the need for manual tuning.

\paragraph{Scope of validation.}
We have validated \methodname{} on 4-step text-to-video distillation with the Wan-2.1 diffusion backbone at two parameter scales. Extending the copula-aware relational recipe to single-step generators, flow-matching formulations, image-to-video conditioning, and long-form video synthesis are natural next directions. These settings introduce additional challenges, such as single-step capacity constraints and long-horizon temporal error accumulation, that may require granularity-specific adaptations of the relational objective.

\paragraph{Broader impact.}
As with all advances that lower the sampling cost of video generators, faster inference enables beneficial applications in creative tools, and content production, but also lowers the barrier for generating synthetic media at scale. Deployments should pair efficiency gains with provenance tracking, watermarking, and content-safety mechanisms to mitigate potential misuse.

\newpage

\hypertarget{app:notation}{}
\section{Notation Table}
\label{app:notation}
\begin{table}[h!]
\centering
\caption{Notation used throughout the paper.}
\label{tab:notation}
\renewcommand{\arraystretch}{1.25}
\begin{tabular}{c|c|l}
\toprule
\textbf{Symbol} & \textbf{Dimension} & \textbf{Description} \\
\midrule
\multicolumn{3}{l}{\textbf{Video data \& dimensions}} \\
\midrule
$F,C, H, W$ & $\mathbb N$ & Number of frames,channels, height, width \\
$\mathbf{x}_0$ & $\mathbb R^{F \times C \times H \times W}$ & Clean video from the real distribution $p_\text{real}$ \\
$\mathbf{z}$ & $\mathbb R^{d_z}$ & Standard Gaussian noise input to the generator \\
$\mathbf{x}^{(b)}$ & $\mathbb R^{F \times C \times H \times W}$ & $b$-th generated video, $\mathbf{x}^{(b)} = G_\theta(\mathbf{z}^{(b)})$ \\
$\mathbf{x}_t$ & $\mathbb R^{F \times C \times H \times W}$ & Diffused video at noise level $t \in [0, 1]$ \\
$B$, $B_\text{g}$ & $\mathbb N$ & Per-GPU and global batch size \\
% $d_z$ & $\mathbb N$ & Dimensionality of the noise input $\mathbf{z}$ \\
% $D$ & $\mathbb N$ & Total dimensionality of the video: $F \cdot C \cdot H \cdot W$ \\
\midrule
\multicolumn{3}{l}{\textbf{Diffusion \& Score}} \\
\midrule
$t$ & $\mathbb R$ & Continuous noise level \\
$\alpha_t, \sigma_t$ & $\mathbb R$ & Noise schedule coefficients; $\alpha_t^2 + \sigma_t^2 = 1$, $\alpha_T \approx 0$ \\
$\mu_\text{real}$ & --- & Frozen teacher denoiser (pre-trained, not updated) \\
$\mu_\text{fake}^\phi$ & --- & Online fake denoiser, trained to track $p_\text{fake}$ \\
$s_\text{real}$ & $\mathbb R^{F \times C \times H \times W}$ & Score of real distribution: $-(\mathbf{x}_t - \alpha_t \mu_\text{real}) / \sigma_t^2$ \\
$s_\text{fake}$ & $\mathbb R^{F \times C \times H \times W}$ & Score of fake distribution: $-(\mathbf{x}_t - \alpha_t \mu_\text{fake}^\phi) / \sigma_t^2$ \\
$w_t$ & $\mathbb R$ & Noise-level weighting for the KL gradient \\
\midrule
\multicolumn{3}{l}{\textbf{DMD Loss}} \\
\midrule
$\boldsymbol{\Delta}(\mathbf{x})$ & $\mathbb R^{F \times C \times H \times W}$ & Coordinate-wise score-difference perturbation \\
$\mathcal L_{\mathrm{DMD}}$ & $\mathbb R$ & Score-as-target surrogate: $\frac12 \|\mathbf{x} - \operatorname{stopgrad}(\mathbf{x} - \boldsymbol{\Delta})\|_2^2$ \\
% $D_{\mathrm{KL}}$ & $\mathbb R$ & KL divergence $D_{\mathrm{KL}}(p_\text{fake} \,\|\, p_\text{real})$ \\
$\mathcal L_\text{denoise}^\phi$ & $\mathbb R$ & Denoising loss for fake model: $\|\mu_\text{fake}^\phi(\mathbf{x}_t, t) - \mathbf{x}\|_2^2$ \\
\midrule
\multicolumn{3}{l}{\textbf{Copula \& CoDMD}} \\
\midrule
$f^{(i)}, F^{(i)}$ & --- & Marginal PDF and CDF of the $i$-th coordinate \\
$c(\cdot)$ & --- & Copula density on $[0,1]^D$ \\
$s^{\text{marg}}$ & $\mathbb R^D$ & Marginal (coordinate-wise) component of the score \\
$s^{\text{cop}}$ & $\mathbb R^D$ & Copula (dependency) component of the score \\
\midrule
\multicolumn{3}{l}{\textbf{Relational Loss}} \\
\midrule
$\mathbf{S}_\text{stu/real/fake}^\text{batch}$ & $\mathbb R^{B_\text{g} \times B_\text{g}}$ & Batch-level cosine similarity matrices from $\mathbf{x}$, $\mu_\text{real}(\mathbf{x}_t, t)$, $\mu_\text{fake}^\phi(\mathbf{x}_t, t)$ \\
$\mathbf{S}_\text{stu/real/fake}^{\text{frame}}$ & $\mathbb R^{F \times F}$ & Frame-level cosine similarity matrices from $\mathbf{x}^{(b)}$, $\mu_\text{real}(\mathbf{x}_t^{(b)}, t)$, $\mu_\text{fake}^\phi(\mathbf{x}_t^{(b)}, t)$ \\
% $\mathbf{S}_\text{stu}^\text{batch}$ & $\mathbb R^{B_\text{g} \times B_\text{g}}$ & Batch-level similarity matrix from $\mathbf{x}$ \\
% $\mathbf{S}_\text{real}^\text{batch}$ & $\mathbb R^{B_\text{g} \times B_\text{g}}$ & Batch-level similarity matrix from $\mu_\text{real}(\mathbf{x}_t, t)$ \\
% $\mathbf{S}_\text{fake}^\text{batch}$ & $\mathbb R^{B_\text{g} \times B_\text{g}}$ & Batch-level similarity matrix from $\mu_\text{fake}^\phi(\mathbf{x}_t, t)$ \\
% $\mathbf{S}_\text{stu}^{\text{frame},(b)}$ & $\mathbb R^{F \times F}$ & Frame-level similarity matrix from $\mathbf{x}^{(b)}$ \\
% $\mathbf{S}_\text{real}^{\text{frame},(b)}$ & $\mathbb R^{F \times F}$ & Frame-level similarity matrix from $\mu_\text{real}(\mathbf{x}_t^{(b)}, t)$ \\
% $\mathbf{S}_\text{fake}^{\text{frame},(b)}$ & $\mathbb R^{F \times F}$ & Frame-level similarity matrix from $\mu_\text{fake}^\phi(\mathbf{x}_t^{(b)}, t)$ \\
$\boldsymbol{\Delta}_{\!\mathbf{S}}$ & $\mathbb R^{N \times N}$ & Relational score-difference: $\mathbf{S}_\text{fake} - \mathbf{S}_\text{real}$ \\
$\tau$ & $\mathbb R_{>0}$ & Temperature for row-wise softmax \\
$\mathbf{P}_\text{tgt}$ & $\mathbb R^{N \times N}$ & Row-wise softmax distribution of target (stop-grad) \\
$\mathbf{P}_\text{stu}$ & $\mathbb R^{N \times N}$ & Row-wise softmax distribution of student \\
$\mathcal L_\text{rel}^\text{batch}$ & $\mathbb R$ & Batch-level relational loss ($N = B_\text{g}$); Eq.~\eqref{eq:rel_batch} \\
$\mathcal L_\text{rel}^\text{frame}$ & $\mathbb R$ & Frame-level relational loss ($N = F$, averaged over $B$); Eq.~\eqref{eq:rel_frame} \\
$\mathcal L_\text{total}$ & $\mathbb R$ & Generator objective: $\mathcal L_{\mathrm{DMD}} + \mathcal \lambda_\text{batch}\mathcal L_\text{rel}^\text{batch} + \lambda_\text{frame}\mathcal L_\text{rel}^\text{frame}$; Eq.~\eqref{eq:total_obj} \\
$\lambda_\text{batch}, \lambda_\text{frame}$ & $\mathbb R_{\ge 0}$ & Weights of the batch- and frame-level relational losses \\
% $N$ & $\mathbb N$ & Side length of similarity matrix ($B_\text{g}$ or $F$) \\
\bottomrule
\end{tabular}
\end{table}

% \section{Technical appendices and supplementary material}
% Technical appendices with additional results, figures, graphs, and proofs may be submitted with the paper submission before the full submission deadline (see above). You can upload a ZIP file for videos or code, but do not upload a separate PDF file for the appendix. There is no page limit for the technical appendices. 

% Note: Think of the appendix as ``optional reading'' for reviewers. The paper must be able to stand alone without the appendix; for example, adding critical experiments that support the main claims to an appendix is inappropriate. 

%%%%%%%%%%%%%%%%%%%%%%%%%%%%%%%%%%%%%%%%%%%%%%%%%%%%%%%%%%%%
% \clearpage
% \newpage
% \input{checklist.tex}

\end{document}